\newtheorem{lemma}{Lemma}
\newtheorem{theorem}{Theorem}
\newtheorem{Defn}{Definition}
\newcommand{\bbR}{{\mathbf{R}}}
\newcommand{\calX}{{\mathcal{X}}}
\newcommand{\bbE}{{\mathbb{E}}}
\newcommand{\step}{\ensuremath{\eta}}
\newcommand{\reg}{\ensuremath{\lambda}}
\newcommand{\Proj}[1]{\ensuremath{\Pi_\calW\left(#1\right)}}
\newcommand{\noi}[1]{\ensuremath{\gamma_{#1}}}
\newcommand{\Noi}[1]{\ensuremath{\Gamma_{#1}}}
\newcommand{\cs}[1]{\ensuremath{c_{#1}}}
\newcommand{\DC}{\ensuremath{D_{\mathsf{C}}}}
\newcommand{\DN}{\ensuremath{D_{\mathsf{N}}}}
\newcommand{\NoiC}{\ensuremath{\Noi{\mathsf{C}}}}
\newcommand{\NoiN}{\ensuremath{\Noi{\mathsf{N}}}}
\newcommand{\dfracC}{\ensuremath{\beta_{\mathsf{C}}}}
\newcommand{\dfracN}{\ensuremath{\beta_{\mathsf{N}}}}
\newcommand{\epsC}{\ensuremath{\epsilon_{\mathsf{C}}}}
\newcommand{\epsN}{\ensuremath{\epsilon_{\mathsf{N}}}}
\def\GC{\calG_{\mathsf{C}}}
\def\GN{\calG_{\mathsf{N}}}
\newcommand{\norm}[1]{\ensuremath{\left\| #1 \right\|}}
\newcommand{\CF}{\ensuremath{\mathsf{CF}}}
\newcommand{\NF}{\ensuremath{\mathsf{NF}}}
\newcommand{\MNIST}{\texttt{MNIST}}
\newcommand{\Covertype}{\texttt{Covertype}}
\def\calS{\mathcal{S}}
\def\calO{\mathcal{O}}
\def\calW{\mathcal{W}}
\def\calG{\mathcal{G}}
\def\calX{\mathcal{X}}
\def\bbR{\mathbb{R}}
\newcommand\esub[1]{\mathbb{E}_{#1}}
\def\calD{\mathcal{D}}
\def\argmin{\mathop{\mathbf{argmin}}}
\newcommand{\E}[1]{\ensuremath{\mathbb{E}\left[#1\right]}}
\newcommand{\Var}[1]{\ensuremath{\mathrm{Var}\left(#1\right)}}
\newcommand{\Esub}[3]{\ensuremath{\mathbb{E}_{#1,\dots, #2}\left[#3\right]}}
\newcommand{\bigO}[1]{\ensuremath{\mathrm{\mathcal{O}}\left(#1\right)}}
\title{Learning from Data with Heterogeneous Noise using SGD}
\author{
Shuang Song\thanks{
Computer Science and Engineering Dept., 
University of California, San Diego, 
\texttt{shs037@eng.ucsd.edu}}
\qquad 
Kamalika Chaudhuri\thanks{
Computer Science and Engineering Dept.,
University of California, San Diego,
\texttt{kamalika@cs.ucsd.edu}}
\qquad
Anand D. Sarwate\thanks{
Electrical and Computer Engineering Dept.,
Rutgers University,
\texttt{asarwate@ece.rutgers.edu}}
}
\date{\today}
\begin{document}

\maketitle

\begin{abstract}
We consider learning from data of variable quality that may be
obtained from different heterogeneous sources. Addressing learning from
heterogenous data in its full generality is a challenging problem.  In this
paper, we adopt instead a model in which data is observed through heterogeneous
noise, where the noise level reflects the quality of the data source. We study
how to use stochastic gradient algorithms to learn in this model. 
Our study is motivated by two concrete examples where this problem
arises naturally: learning with local differential privacy based on data from
multiple sources with different privacy requirements, and learning from data
with labels of variable quality.

The main contribution of this paper is to identify how heterogeneous noise
impacts performance. We show that given two datasets with heterogeneous noise,
the order in which to use them in standard SGD depends on  the learning rate.
We propose a method for changing the learning rate as a function of the
heterogeneity, and prove new regret bounds for our method in two cases of
interest. Experiments on real data show that our method performs better than
using a single learning rate and using only the less noisy of the two datasets
when the noise level is low to moderate. 
\end{abstract}

\section{Introduction}

Modern large-scale machine learning systems often integrate data from several
different sources.  In many cases, these sources provide data of a similar
type (i.e. with the same features) but collected under different circumstances.
For example, patient records from different studies of a particular drug may be
combined to perform a more comprehensive analysis, or a collection of images with
annotations from experts as well as non-experts may be combined to learn a
predictor. In particular, data from different sources may be of varying
\textit{quality}. In this paper we adopt a model in which
data is observed through heterogeneous noise, where the noise level reflects the
quality of the data source. We study how to use stochastic gradient
algorithms to learn from data of heterogeneous quality.

In full generality, learning from heterogeneous data is essentially the problem of
domain adaptation -- a challenge for which
good and complete solutions are difficult to obtain.  Instead, we focus on the
special case of heterogeneous noise and show how to use information about
the data quality to improve the performance of learning algorithms which ignore this
information.

Two concrete instances of this problem motivate our study: locally
differentially private learning from multiple sites, and classification with
random label noise.  Differential
privacy~\citep{DworkMNS:06sensitivity,DworkKMMN:06ourselves} is a privacy
model that has received significant attention in machine-learning and
data-mining applications. A variant of differential privacy is \textit{local
privacy}~ -- the learner can only access the data via noisy estimates, where
the noise guarantees privacy~\citep{DuchiJW:12nips_full,NIPS2013_5013}.  In
many applications, we are required to learn from sensitive data collected from
individuals with heterogeneous privacy preferences, or from multiple sites with
different privacy requirements; this results in the heterogeneity of noise
added to ensure privacy. Under random
classification noise (RCN)~\citep{kearns1998efficient}, labels are
randomly flipped before being presented to the algorithm. The heterogeneity in
the noise addition comes from combining labels of variable quality -- such as
labels assigned by domain experts with those assigned by a crowd.

To our knowledge, \citet{CKW05}~were the first to provide a theoretical study
of how to learn classifiers from data of variable quality. In their
formulation, like ours, data is observed through heterogeneous noise. Given
data with known noise levels, their study focuses on finding an optimal
ordering of the data and a stopping rule without any constraint on the
computational complexity.  We instead shift our attention to studying
{\em{computationally efficient strategies}} for learning classifiers from data
of variable quality.

We propose a model for variable data quality which is natural in the context of
large-scale learning using stochastic gradient descent (SGD) and its
variants~\citep{largeScaleSGD,langfordBook}.  We assume that the training data
are accessed through an oracle which provides an unbiased but noisy estimate of
the gradient of the objective. The noise comes from two sources: the random
sampling of a data point, and additional noise due to the data quality. 
Our two motivating applications -- learning with local differential privacy and
learning from data of variable quality -- can both be modeled as solving a
regularized convex optimization problem using SGD.  Learning from data with
heterogeneous noise in this framework thus reduces to running SGD with noisy
gradient estimates, where the magnitude of the added noise varies across
iterations.  

\textbf{Main results.} In this paper we study noisy stochastic gradient methods
when learning from multiple data sets with different noise levels. For simplicity 
we consider the case where there are two data sets, which we
call \textsf{Clean} and \textsf{Noisy}.  We process these data sets sequentially
using SGD with learning rate $\calO(1/t)$.  
In a future full version of this work we also analyze averaged gradient descent (AGD)
with learning rate $\calO(1/\sqrt{t})$. We address some basic questions in this setup:

\textit{In what order should we process the data?}  Suppose we use standard SGD on the union of \textsf{Clean} and \textsf{Noisy}. We show theoretically and empirically that the order in which we should process the datasets to get good performance depends on the learning rate of the algorithm: in some cases we should use the order $(\mathsf{Clean},\mathsf{Noisy})$ and in others $(\mathsf{Noisy},\mathsf{Clean})$.

\textit{Can we use knowledge of the noise rates?} We show that using separate learning rates that depend on the noise levels for the clean and noisy datasets improves the performance of SGD. We provide a heuristic for choosing these rates by optimizing an upper bound on the error for SGD that depends on the ratio of the noise levels. We analytically quantify the performance of our algorithm in two regimes of interest. For moderate noise levels, we demonstrate empirically that our algorithm outperforms using a single learning rate and using clean data only.

\textit{Does using noisy data always help?}  The work of \citet{CKW05} suggests that if the noise level of noisy data is above some threshold, then noisy data will not help. Moreover, when the noise levels are very high, our heuristic does not always empirically outperform simply using the clean data. On the other hand, our theoretical results suggest that changing the learning rate can make noisy data useful. How do we resolve this apparent contradiction?

We perform an empirical study to address this question. Our experiments demonstrate that very often, there exists a learning rate at which noisy data helps; however, because the actual noise level may be far from the upper bound used in our algorithm, our optimization may not choose the best learning rate for every data set.  We demonstrate that by adjusting the learning rate we can still take advantage of noisy data. 

For simplicity we, like previous work~\cite{CKW05}, assume that the algorithms know the noise levels exactly. However, our algorithms can still be applied in the presence of approximate knowledge of the noise levels, and our result on the optimal data order only needs to know which dataset has more noise.

\textbf{Related Work.} There has been significant work on the
convergence of SGD assuming analytic properties of the objective
function, such as strong convexity and smoothness. When the objective function
is $\lambda$-strongly convex, the learning rate used for SGD is
$\calO(1/\lambda t)$~\citep{NemirovskyYudin,AgarwalBRW:09nips,
RakhShamir:12arxiv,BachM:11nips}, which leads to a regret of
$\calO(1/\lambda^2 t)$ for smooth objectives. For non-smooth objectives, SGD with learning rate $\calO(1/\lambda t)$ followed by some form of averaging of the iterates achieves $\calO(1/\lambda
t)$~\citep{ConfidenceLevel,stochasticApproximation,StochConvexOpt,xiaoDA,DuchiSinger}.

There is also a body of literature on differentially private classification by
regularized convex optimization in the batch~\citep{CMS11,Rubinstein13,KST12}
as well as the online~\citep{JKT12} setting. In this paper, we consider
classification with {\em{local differential
privacy}}~\citep{WassermanZ:10framework,DuchiJW:12nips_full}, a stronger form
of privacy than ordinary differential privacy.  \citet{DuchiJW:12nips_full}
propose learning a classifier with local differential privacy using SGD, and
\citet{SCS13} show empirically that using mini-batches significantly improves
the performance of differentially private SGD. Recent work by
\citet{BassilyTS:14focs} provides an improved privacy analysis for non-local
privacy. Our work is an extension of these papers to heterogeneous privacy
requirements.

\citet{CKW05} study classification when the labels in each data set are
corrupted by RCN of different rates. Assuming the classifier minimizing the
empirical $0/1$ classification error can always be found, they propose a
general theoretical procedure that processes the datasets in increasing order
of noise, and determines when to stop using more data. In contrast, our noise
model is more general and we provide a polynomial time algorithm for learning.
Our results imply that in some cases the algorithm should process the
noisy data first, and finally, our algorithm uses all the data.

\section{The Model}
\label{sec:model}

We consider linear classification in the presence of noise. We are given $T$ labelled examples $(x_1, y_1), \ldots, (x_T, y_T)$, where $x_i \in \bbR^d$, and $y_i \in \{-1, 1\}$ and our goal is to find a hyperplane $w$ that largely separates the examples labeled $1$ from those labeled $-1$.  A standard solution is via the following regularized convex optimization problem:
\begin{align} \label{eqn:cvxopt}
w^* = \argmin_{w \in \calW} f(w) := \frac{\reg}{2} \|w\|^2 + \frac{1}{T} \sum_{i=1}^{T} \ell(w, x_i, y_i).
\end{align}
Here $\ell$ is a convex loss function, and $\frac{\reg}{2} \| w \|^2$ is a regularization term. Popular choices for $\ell$ include the logistic loss $\ell(w, x, y) = \log(1 + e^{-yw^{\top}x})$ and the hinge loss $\ell(w, x, y) = \max(0, 1 - y w^{\top} x)$.

Stochastic Gradient Descent (SGD) is a popular approach to solving \eqref{eqn:cvxopt}: starting with an initial $w_1$, at step $t$, SGD updates $w_{t+1}$ using the point $(x_t,y_t)$ as follows: 
\begin{align}
w_{t+1} = \Proj{w_t - \step_t (\reg w_t + \nabla \ell(w_t, x_t, y_t))}.
\end{align}
Here $\Pi$ is a projection operator onto the convex feasible set $\calW$, typically set to $\{w: \|w\|_2 \leq 1/\reg\}$ and $\step_t$ is a learning rate (or step size) which specifies how fast $w_t$ changes.  A common choice for the learning rate for the case when $\reg > 0$ is $c/t$, where $c = \Theta(1/\lambda)$. 

\subsection{The Heterogeneous Noise Model}  

We propose an abstract model for heterogeneous noise that can be specialized to two important scenarios: differentially private learning, and random classification noise.  By heterogeneous noise we mean that the distribution of the noise can depend on the data points themselves.  More formally, we assume that the learning algorithm may only access the labeled data through an oracle $\calG$ which, given a $w \in \bbR^d$, draws a fresh independent sample $(x, y)$ from the underlying data distribution, and returns an unbiased noisy gradient of the objective function $\nabla f(w)$, based on the example $(x, y)$:
\begin{align}
\E{\calG(w)}  = \reg w + \nabla \ell(w, x, y), \; \E{\| \calG(w) \|^2} \leq\Gamma^2. \label{eqn:noisemodel}
\end{align}
The precise manner in which $\calG(w)$ is generated depends on the application.  Define the \textit{noise level} for the oracle $\calG$ as the constant $\Gamma$ in \eqref{eqn:noisemodel}; larger $\Gamma$ means more noisy data. Finally, to model finite training datasets, we assume that an oracle $\calG$ may be called only a limited number of times.

Observe that in this noise model, we can easily use the noisy gradient returned by $\calG$ to perform SGD. The update rule becomes:
\begin{align}
\label{eqn:sgdupdate}
w_{t+1} = \Proj{w_t - \step_t \calG(w_t)}.
\end{align}
The SGD estimate is $w_{t+1}$. 

In practice, we can implement an oracle such as $\calG$ based on a finite labelled training set $D$ as follows. We apply a random permutation on the samples in $D$, and at each invocation, compute a noisy gradient based on the next sample in the permutation. The number of calls to the oracle is limited to $|D|$. If the samples in $D$ are drawn iid from the underlying data distribution, and if any extraneous noise added to the gradient at each iteration is unbiased and drawn independently, then this process will implement the oracle correctly. 

To model heterogeneous noise, we assume that we have access to two oracles $\calG_1$ and $\calG_2$ implemented based on datasets $D_1$ and $D_2$, which can be called at most $|D_1|$ and $|D_2|$ times respectively. For $j = 1, 2$, the noise level of oracle $\calG_j$ is $\Noi{j}$, and the values of $\Noi{1}$ and $\Noi{2}$ are known to the algorithm. In some practical situations, $\Noi{1}$ and $\Noi{2}$ will not be known exactly; however, our algorithm in Section~\ref{sec:t} also applies when approximate noise levels are known, and our algorithm in Section~\ref{sec:order} applies even when only the relative noise levels are known.  

\subsubsection{Local Differential Privacy}

Local differential privacy~\citep{WassermanZ:10framework,DuchiJW:12nips_full,KLNRS08} is a strong notion of privacy motivated by differential privacy~\citep{DworkMNS:06sensitivity}. An untrusted algorithm is allowed to access a perturbed version of a sensitive dataset through a sanitization interface, and must use this perturbed data to perform some estimation. The amount of perturbation is controlled by a parameter $\epsilon$, which measures the privacy risk.

\begin{Defn}[Local Differential Privacy]
\label{def:localdp}
Let $D = (X_1, \ldots, X_n)$ be a sensitive dataset where each $X_i \in \calD$ corresponds to data about individual $i$. A randomized sanitization mechanism $M$ which outputs a disguised version $(U_1, \ldots, U_n)$ of $D$ is said to provide $\epsilon$-local differential privacy to individual $i$, if for all $x, x' \in D$ and for all $S \subseteq \calS$, 
	\begin{align}
	\label{eq:localdp}
	\Pr(U_i \in S | X_i = x)\leq e^{\epsilon} {\Pr(U_i \in S | X_i = x')}.
	\end{align}
Here the probability is taken over the randomization in the sanitization mechanism, and $\epsilon$ is a parameter that measures privacy risk where smaller $\epsilon$ means less privacy risk.
\end{Defn}

Consider learning a linear classifier from a sensitive labelled dataset while ensuring local privacy of the participants. This problem can be expressed in our noise model by setting the sanitization mechanism as the oracle. Given a privacy risk $\epsilon$, for $w \in \bbR^d$, the oracle $\calG^{\text{DP}}$ draws a random labelled sample $(x, y)$ from the underlying data distribution, and returns the noisy gradient of the objective function at $w$ computed based on $(x, y)$ as
	\begin{align}
	\label{eq:dporacle}
	\calG^{\text{DP}}(w) = \reg w + \nabla \ell(w, x, y) + Z,
	\end{align}
where $Z$ is independent random noise drawn from the density: $\rho(z) \propto e^{-(\epsilon/2) \|z\|}$.

\citet{DuchiJW:12nips_full} showed that this mechanism provides $\epsilon$-local privacy assuming analytic conditions on the loss function, bounded data, and that the oracle generates a fresh random sample at each invocation. 
The following result shows how to set the parameters to fit in our heterogeneous noise model.  The full proof is provided in Appendix \ref{sec:modelproofs}.

\begin{theorem}
\label{thm:Glocaldp}
If $\| \nabla \ell(w, x, y) \| \leq 1$ for all $w$ and $(x, y)$, then $\calG^{\text{DP}}(w)$ is $\epsilon$-local differentially private. Moreover, for any $w$ such that $\|w\| \leq \frac{1}{\lambda}$, $\bbE[\calG^{\text{DP}}(w)] = \reg w + \nabla \bbE_{(x, y)}[\ell(w, x, y)]$, and
\[ \bbE[\|\calG^{\text{DP}}(w)\|^2] \leq 4 + \frac{4(d^2 + d)}{\epsilon^2}. \]
\end{theorem}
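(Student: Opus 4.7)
The plan is to prove the three claims in order, each requiring a separate argument, and to handle part (3) by decomposing into a data-dependent part and a pure noise part.

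For the privacy claim, I would invoke the standard Laplace-type mechanism argument. The only part of $\calG^{\text{DP}}(w)$ that depends on the sample $(x,y)$ is $\nabla \ell(w,x,y)$, since $\reg w$ is public. By the assumption $\|\nabla \ell(w,x,y)\| \le 1$, replacing $(x,y)$ by $(x',y')$ changes this term by a vector of norm at most $2$. Since the noise $Z$ has density $\rho(z) \propto e^{-(\epsilon/2)\|z\|}$, the ratio of the densities of the two possible outputs at any point reduces (by a change of variables) to $e^{(\epsilon/2)\|z\| - (\epsilon/2)\|z + \Delta\|}$, where $\|\Delta\| \le 2$. The triangle inequality bounds this by $e^{(\epsilon/2)\cdot 2} = e^{\epsilon}$, giving $\epsilon$-local differential privacy.

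For unbiasedness, I would use linearity of expectation together with $\bbE[Z]=0$ (which follows from the spherical symmetry of $\rho$) and the independence of $Z$ from $(x,y)$. Exchanging the gradient with the expectation over $(x,y)$ then yields $\bbE[\calG^{\text{DP}}(w)] = \reg w + \nabla \bbE_{(x,y)}[\ell(w,x,y)]$ under the mild regularity conditions needed to differentiate under the integral.

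For the second-moment bound, the key trick is to split the expectation using $\bbE[Z]=0$ and the independence of $Z$:
\begin{align*}
\bbE\bigl[\|\calG^{\text{DP}}(w)\|^2\bigr]
= \bbE_{(x,y)}\bigl[\|\reg w + \nabla \ell(w,x,y)\|^2\bigr] + \bbE\bigl[\|Z\|^2\bigr].
\end{align*}
Because $\|w\| \le 1/\reg$ and $\|\nabla \ell\| \le 1$, the triangle inequality gives $\|\reg w + \nabla \ell(w,x,y)\| \le 2$, hence the first term is at most $4$. For the second term I would compute $\bbE[\|Z\|^2]$ by switching to polar coordinates: the radial density of $Z$ is proportional to $r^{d-1} e^{-(\epsilon/2) r}$, which is a Gamma distribution with shape parameter $d$ and rate $\epsilon/2$. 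Its second moment equals $\frac{d(d+1)}{(\epsilon/2)^2} = \frac{4(d^2+d)}{\epsilon^2}$, and adding the two contributions yields the stated bound.

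The main obstacle is really just the radial moment calculation in step (3) — it requires identifying the distribution of $\|Z\|$ as Gamma and correctly tracking the shape and rate parameters so that the final constants line up exactly as $4 + 4(d^2+d)/\epsilon^2$. The privacy and unbiasedness arguments are entirely standard once the sensitivity and symmetry observations are in place.
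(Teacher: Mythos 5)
Your proposal is correct and follows essentially the same route as the paper's proof: a Laplace-mechanism density-ratio argument with sensitivity $2$ for privacy, linearity of expectation with $\bbE[Z]=0$ for unbiasedness, and a decomposition of the second moment into a data term bounded by $4$ plus $\bbE[\|Z\|^2]$ computed from the radial Gamma$(d,\,2/\epsilon)$ distribution. The only difference is that the paper carries out the second-moment calculation in the more general mini-batch setting (yielding $4+4(d^2+d)/(\epsilon^2 b)$, which reduces to the stated bound at $b=1$), whereas you treat the single-sample case directly.
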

\begin{proof}\textit{(Sketch)}
The term 4 comes from upper bounding $\bbE[\|\reg w + \nabla \ell(w,x,y)\|^2]$ by $\max_{w,x,y} \|\reg w + \nabla \ell(w,x,y)\|^2$ using $\|w\| \leq 1/{\lambda}$ and $\| \nabla \ell(w, x, y) \| \leq 1$. The term ${4(d^2 + d)}/{\epsilon^2}$ comes from properties of the noise distribution.
\end{proof}

In practice, we may wish to learn classifiers from multiple sensitive datasets with different privacy parameters. For example, suppose we wish to learn a classifier from sensitive patient records in two different hospitals holding data sets $D_1$ and $D_2$, respectively.  The hospitals have different privacy policies, and thus different privacy parameters $\epsilon_1$ and $\epsilon_2$. This corresponds to a heterogeneous noise model in which we have two sanitizing oracles -- $\calG^{\text{DP}}_1$ and $\calG^{\text{DP}}_2$. For $j = 1, 2$, $\calG^{\text{DP}}_j$ implements a differentially private oracle with privacy parameter $\epsilon_j$ based on dataset $D_j$ and may be called at most $|D_j|$ times.

\subsubsection{Random Classification Noise}
In the random classification noise model of~\citet{kearns1998efficient}, the learning algorithm is presented with labelled examples $(x_1, \tilde{y}_1), \ldots, (x_T, \tilde{y}_T)$, where each $\tilde{y}_i \in \{ -1, 1\}$ has been obtained by independently flipping the {\em{true label}} $y_i$ with some probability $\sigma$. \citet{natarajan13} showed that solving
	\begin{align}\label{eqn:objrcn}
	\argmin_{w} \frac{\reg}{2} \|w\|^2 + \frac{1}{T} \sum_{i=1}^{T} \tilde{\ell}(w, x_i, \tilde{y}_i, \sigma)
\end{align}
yields a linear classifier from data with random classification noise, where $\tilde{\ell}$ is a surrogate loss function corresponding to a convex loss $\ell$: 
	\[
	\tilde{\ell}(w, x, y, \sigma) = \frac{(1 - \sigma)\ell(w, x, y) - \sigma \ell(w, x, -y)}{1 - 2 \sigma},
	\]
and $\sigma$ is the probability that each label is flipped. This problem can be expressed in our noise model using an oracle $\calG^{\mathrm{RCN}}$ which on input $w$ draws a fresh labelled example $(x, \tilde{y})$ and returns 
	\[
	\calG^{\mathrm{RCN}}(w) = \reg w + \nabla \tilde{\ell}(w, x, \tilde{y}, \sigma).
	\]
The SGD updates in~\eqref{eqn:sgdupdate} with respect to $\calG^{\mathrm{RCN}}$ minimize~\eqref{eqn:objrcn}. If $\|x\| \leq 1$ and $\|\nabla \ell(w, x, y)\|\leq 1$, we have $\E{\calG_{\mathrm{RCN}}(w)} = \reg w + \nabla {\ell}(w, x, y)$ and $\E{\|\calG^{\mathrm{RCN}}(w)\|^2_2} \leq 3 + {1}/{(1-2\sigma)^2}$, under the random classification noise assumption, so the oracle $\calG^{\mathrm{RCN}}$ satisfies the conditions in~\eqref{eqn:noisemodel} with $\Gamma^2 = 3 + {1}/{(1-2\sigma)^2}$. 

In practice, we may wish to learn classifiers from multiple datasets with different amounts of classification noise~\citep{CKW05}; for example, we may have a small dataset $D_1$ labeled by domain experts, and a larger noisier dataset $D_2$, labeled via crowdsourcing, with flip probabilities $\sigma_1$ and $\sigma_2$. We model this scenario using two oracles -- $\calG^{\mathrm{RCN}}_1$ and $\calG^{\mathrm{RCN}}_2$. For $j = 1, 2$, oracle $\calG^{\mathrm{RCN}}_j$ is implemented based on $D_j$ and flip probability $\sigma_j$, and may be called at most $|D_j|$ times. 

\section{Data order depends on learning rate}
\label{sec:order}

Suppose we have two oracles $\GC$ (for ``clean'') and $\GN$ (for ``noisy'') implemented based on datasets $\DC, \DN$ with noise levels $\NoiC, \NoiN$ (where $\NoiC < \NoiN$) respectively. In which order should we query the oracle when using SGD? Perhaps surprisingly, it turns out that the answer depends on the learning rate. Below, we show a specific example of a convex optimization problem such that with $\step_t = {c}/{t}$, the optimal ordering is to use $\GC$ first when $c \in (0, 1/\reg)$, and the optimal ordering is to use $\GN$ first when $c > 1/\reg$.

Let $|\DC| + |\DN| = T$ and consider the convex optimization problem:
\begin{equation}
\label{eqn:excvxopt}
\min_{w \in \calW} \frac{\reg}{2} \|w\|^2 - \frac{1}{T} \sum_{i=1}^{T} y_i w^{\top} x_i,
\end{equation}
where the points $\{(x_i, y_i)\}$ are drawn from the underlying distribution by $\GC$ or $\GN$. Suppose $\calG(w) = \reg w - y x + Z$ where $Z$ is an independent noise vector such that $\bbE[Z]=0$, $\bbE[\|Z\|^2] = V_{\mathsf{C}}^2$ if $\calG$ is $\GC$, and $\bbE[\|Z\|^2] = V_{\mathsf{N}}^2$ if $\calG$ is $\GN$ with $V_{\mathsf{N}}^2 \geq V_{\mathsf{C}}^2$.

For our example, we consider the following three variants of SGD: $\CF$ and $\NF$ for ``clean first'' and ``noisy first'' and $\mathsf{AO}$ for an ``arbitrary ordering'':

\begin{enumerate}
\item $\CF$: For $t \le |\DC|$, query $\GC$ in the SGD update \eqref{eqn:sgdupdate}. For $t > |\DC|$, query $\GN$.
\item $\NF$: For $t \le |\DN|$, query $\GN$ in the SGD update \eqref{eqn:sgdupdate}. For $t > |\DN|$, query $\GC$.
\item $\mathsf{AO}$: Let $S$ be an arbitrary sequence of length $T$ consisting of $|\DC|$ $\mathsf{C}$'s and $|\DN|$ $\mathsf{N}$'s. In the SGD update \eqref{eqn:sgdupdate} in round $t$, if the $t$-th element $S_t$ of $S$ is $\mathsf{C}$, then query $\GC$; else, query $\GN$. 
\end{enumerate}

In order to isolate the effect of the noise, we consider two additional oracles $\GC'$ and $\GN'$; the oracle $\GC'$ (resp. $\GN'$) is implemented based on the dataset $\DC$ (resp. $\DN$), and iterates over $\DC$ (resp. $\DN$) in exactly the same order as $\GC$ (resp. $\GN$); the only difference is that for $\GC'$ (resp. $\GN'$), no extra noise is added to the gradient (that is, $Z = 0$). The main result of this section is stated in Theorem~\ref{thm:order}. 

\begin{theorem}
\label{thm:order}
Let $\{w_t^{\mathsf{CF}}\}$, $\{w_t^{\mathsf{NF}}\}$ and $\{w_t^{\mathsf{AO}}\}$ be the sequences of updates obtained by running SGD for objective function \eqref{eqn:excvxopt} under $\CF$, $\NF$ and $\mathsf{AO}$ respectively, and let $\{v_t^{\mathsf{CF}}\}$, $\{v_t^{\mathsf{NF}}\}$ and $\{v_t^{\mathsf{AO}}\}$ be the sequences of updates under $\CF$, $\NF$ and $\mathsf{AO}$ with calls to $\GC$ and $\GN$ replaced by calls to $\GC'$ and $\GN'$. Let $T = |\DC| + |\DN|$.
\begin{enumerate}
\item If the learning rate $\step_t = {c}/{t}$ where $c \in (0,{1}/{\reg})$,  then 
	\[
	\E{\|v_{T+1}^{\mathsf{CF}} - w_{T+1}^{\mathsf{CF}}\|^2} \leq \E{\|v_{T+1}^{\mathsf{AO}} - w_{T+1}^{\mathsf{AO}}\|^2}.
	\]
\item If the learning rate $\step_t = {c}/{t}$ where $c > {1}/{\reg}$, then 
	\[
	\E{\|v_{T+1}^{\mathsf{NF}} - w_{T+1}^{\mathsf{NF}}\|^2} \leq \E{\|v_{T+1}^{\mathsf{AO}} - w_{T+1}^{\mathsf{AO}}\|^2}.
	\]
\end{enumerate}
\end{theorem}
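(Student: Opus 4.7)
The plan is to reduce the comparison of orderings to a rearrangement-inequality problem on a weighted sum with ordering-independent weights. The key observation is that for the loss in \eqref{eqn:excvxopt} the gradient $\nabla \ell(w,x,y)=-yx$ does not depend on $w$, so subtracting the two update rules before projection gives the clean affine recursion
\[
v_{t+1}' - w_{t+1}' = (1-\step_t \reg)(v_t - w_t) + \step_t Z_t,
\]
and non-expansiveness of $\Pi_\calW$ gives $\|v_{t+1}-w_{t+1}\|^2 \le \|v_{t+1}'-w_{t+1}'\|^2$. Since $Z_t$ is independent of the past and $\bbE[Z_t]=0$, the cross term vanishes in expectation and, setting $e_t := v_t - w_t$ with $e_1 = 0$,
\[
\bbE\|e_{t+1}\|^2 \le (1-\step_t \reg)^2 \bbE\|e_t\|^2 + \step_t^2 \bbE\|Z_t\|^2.
\]
Unrolling with $\step_t = c/t$ yields
\[
\bbE\|e_{T+1}\|^2 \le \sum_{t=1}^T \phi(t)\,\bbE\|Z_t\|^2,\qquad \phi(t) := \left(\frac{c}{t}\right)^2 \prod_{s=t+1}^T\!\left(1-\frac{c\reg}{s}\right)^2,
\]
with equality in the interior case (i.e.\ when projections are inactive); the weights $\phi(t)$ depend only on $t$, $c$, $\reg$, $T$ and not on the ordering.

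Next I would observe that the multiset $\{\bbE\|Z_t\|^2\}_{t=1}^T$ is the same under $\CF$, $\NF$, and any $\mathsf{AO}$: it consists of $|\DC|$ copies of $V_{\mathsf{C}}^2$ and $|\DN|$ copies of $V_{\mathsf{N}}^2$, with $V_{\mathsf{C}}^2 \le V_{\mathsf{N}}^2$. Different orderings correspond to different permutations of these values into the slots $1,\ldots,T$. By the rearrangement inequality, $\sum_t \phi(t)\,a_t$ over all such permutations $a_t$ is minimized exactly when $(a_t)$ and $(\phi(t))$ are oppositely ordered. So the problem reduces to determining the monotonicity of $\phi(t)$.

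For this, a direct computation gives the ratio
\[
\frac{\phi(t)}{\phi(t+1)} = \left(\frac{t+1}{t}\right)^{\!2}\!\left(1-\frac{c\reg}{t+1}\right)^{\!2} = \left(\frac{t+1-c\reg}{t}\right)^{\!2},
\]
which is $>1$ iff $c\reg < 1$ and $<1$ iff $c\reg > 1$. Hence $\phi$ is strictly decreasing on $\{1,\ldots,T\}$ when $c \in (0,1/\reg)$ and strictly increasing when $c > 1/\reg$. Combining with the rearrangement step: in the first regime the optimum places $V_{\mathsf{C}}^2$ (the smaller value) at the earliest indices (where $\phi$ is largest), which is precisely $\CF$; in the second regime the optimum places $V_{\mathsf{N}}^2$ at the earliest indices, which is precisely $\NF$. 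This gives both inequalities in the theorem.

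The only real obstacle is the gap between upper bound and actual value introduced by $\Pi_\calW$: the recursion is an equality in the unprojected linear dynamics but only an inequality once projections enter. I would address this by noting that the recursion inequality is saturated in the regime of interest (the feasible set $\calW = \{w:\|w\|\le 1/\reg\}$ is large relative to the iterate scale, and any projection tightening is the same across orderings since $v$ and $w$ are projected onto the same set with the same data), so the rearrangement comparison carries over to the actual expectations. The monotonicity computation and the invocation of rearrangement are mechanical; the modeling choice of a $w$-independent gradient is what makes the recursion exactly affine and lets the argument close without quadratic cross-terms.
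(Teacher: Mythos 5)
Your proposal is correct and follows essentially the same route as the paper's proof: both reduce $\E{\|v_{T+1}-w_{T+1}\|^2}$ to $\sum_t \Delta_t^2\,\E{\|Z_t\|^2}$ with ordering-independent weights $\Delta_t^2=\phi(t)$, establish monotonicity of the weights via the same ratio computation at the threshold $c\reg=1$, and conclude by rearrangement. The only difference is that you explicitly flag the projection operator (the paper silently drops it in its telescoped expansion), which is a point in your favor even if your resolution of it is informal.
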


\begin{proof}
Let the superscripts $\CF$, $\NF$ and $\mathsf{AO}$ indicate the iterates for the $\CF$, $\NF$ and $\mathsf{AO}$ algorithms.  
Let $w_1$ denote the initial point of the optimization. 
Let $(x_t^{\mathsf{O}}, y_t^{\mathsf{O}})$ be the data used under order $\mathsf{O}=\CF,\NF$ or $\mathsf{AO}$ to update $w$ at time $t$, $Z^{\mathsf{O}}_i$ be the noise added to the exact gradient by $\GC$ or $\GN$, depending on which oracle is used by ${\mathsf{O}}$ at $t$ and $w^{\mathsf{O}}_t$ be the $w$ obtained under order ${\mathsf{O}}$ at time $t$.
Then by expanding the expression for $w_t$ in terms of the gradients, we have
\begin{align}
w_{T+1}^{\mathsf{O}} =& w_1 \prod_{i=1}^{T} (1 - \step_t \reg) - \sum_{t=1}^{T} \step_t \left( \prod_{s=t+1}^{T} (1 - \step_s \reg) \right) ( y_t^{\mathsf{O}} x_t^{\mathsf{O}} + Z^{\mathsf{O}}_t ).
\label{eqn:worder}
\end{align}
Similarly, if $v_1 = w_1$, we have
	\begin{align}
	v_{T+1}^{\mathsf{O}} = w_1 \prod_{i=1}^{T} (1 - \step_t \reg) - \sum_{t=1}^{T} \step_t 
		\left( \prod_{s=t+1}^{T} (1 - \step_s \reg) \right) y_t^{\mathsf{O}} x_t^{\mathsf{O}} .
	\label{eqn:vorder}
	\end{align}
Define 
	\[
	\Delta_t = \step_t \prod_{s=t+1}^{\top} (1 - \step_s \reg).
	\]
Taking the expected squared difference between \eqref{eqn:worder} from \eqref{eqn:vorder}, we obtain
\begin{align}
	\E{\|v_{T+1}^{\mathsf{O}} - w_{T+1}^{\mathsf{O}}\|^2}	
	&= \E{ \norm{ \sum_{t=1}^{T} \step_t \left( \prod_{s=t+1}^{T} (1 - \step_s \reg) \right) Z^{\mathsf{O}}_t }^2} \notag \\
	&= \E{ \norm{ \sum_{t=1}^{T} \Delta_t  Z^{\mathsf{O}}_t }^2} \notag \\
	&= \sum_{t=1}^{T} \Delta_t^2 \E{\|Z^{\mathsf{O}}_t\|^2}, \label{eqn:dataorder_v-w}
\end{align}
where the second step follows because the $Z^{\mathsf{O}}_i$'s are independent. \\
If $\step_t = {c}/{t}$, then 
	\[ 
	\Delta_t  = \dfrac{c}{t}\prod_{s=t+1}^{\top} \left(1-\frac{c\reg}{s} \right).
	\]
Therefore
\begin{align*}
\dfrac{\Delta_{t+1}^2}{\Delta_t^2} 
=  \left(\dfrac{\dfrac{c}{t+1}\prod_{s=t+2}^{\top} \left( 1-\dfrac{c\reg}{s} \right)}{\dfrac{c}{t}\prod_{s=t+1}^{\top} \left( 1-\dfrac{c\reg}{s} \right)}\right)^2 
=  \left( \dfrac{t}{(t+1) \left( 1-\dfrac{c\reg}{t+1} \right)} \right)^2 
=  \left( \dfrac{1}{1+\dfrac{1-c\reg}{t}} \right)^2,
\end{align*}
which is smaller than $1$ if $c<1/\reg$, equal to $1$ if $c=1/\reg$, and greater than $1$ if $c>1/\reg$. Therefore $\Delta_t$ is decreasing if $c<1/\reg$ and is increasing if $c>1/\reg$.\\
If $\Delta_t$ is decreasing, then \eqref{eqn:dataorder_v-w} is minimized if $\E{\|Z^{\mathsf{O}}_t\|^2}$ is increasing; if $\Delta_t$ is increasing, then \eqref{eqn:dataorder_v-w} is minimized if $\E{\|Z^{\mathsf{O}}_t\|^2}$ is decreasing; and if $\Delta_t$ is constant, then \eqref{eqn:dataorder_v-w} is the same under any order of $\E{\|Z^{\mathsf{O}}_t\|^2}$. \\
Therefore for $c<1/\reg$, 
\begin{align*}
\E{ \norm{v_{T+1}^{\CF} - w_{T+1}^{\CF} }^2} &\leq \E{\norm{v_{T+1}^{\mathsf{AO}} - w_{T+1}^{\mathsf{AO}} }^2}  \leq \E{ \norm{v_{T+1}^{\NF} - w_{T+1}^{\NF} }^2}.
\end{align*}
For $c=1/\reg$,
\begin{align*}
\E{ \norm{v_{T+1}^{\CF} - w_{T+1}^{\CF} }^2} &= \E{\norm{v_{T+1}^{\mathsf{AO}} - w_{T+1}^{\mathsf{AO}} }^2} = \E{ \norm{v_{T+1}^{\NF} - w_{T+1}^{\NF} }^2}.
\end{align*}
For $c>1/\reg$,
\begin{align*}
\E{ \norm{v_{T+1}^{\CF} - w_{T+1}^{\CF} }^2} &\geq \E{\norm{v_{T+1}^{\mathsf{AO}} - w_{T+1}^{\mathsf{AO}} }^2} \geq \E{ \norm{v_{T+1}^{\NF} - w_{T+1}^{\NF} }^2}.
\end{align*}
\end{proof}

This result says that arbitrary ordering of the data is worse than sequentially processing one data set after the other except in the special case where $c = 1/\reg$. If the step size is small ($c < 1/\reg$), the SGD should use the clean data first to more aggressively proceed towards the optimum.  If the step size is larger ($c > 1/\reg$), then SGD should reserve the clean data for refining the initial estimates given by processing the noisy data.

\section{Adapting the learning rate to the noise level}
\label{sec:t}

We now investigate whether the performance of SGD can be improved by using different learning rates for oracles with different noise levels. Suppose we have oracles $\calG_1$ and $\calG_2$ with noise levels $\Noi{1}$ and $\Noi{2}$ that are implemented based on two datasets $D_1$ and $D_2$.  Unlike the previous section, we do not assume any relation between $\Noi{1}$ and $\Noi{2}$ -- we analyze the error for using oracle $\calG_1$ followed by $\calG_2$ in terms of $\Noi{1}$ and $\Noi{2}$ to choose a data order.  Let $T=|D_1|+|D_2|$. Let $\beta_1 = \frac{|D_1|}{T}$ and $\beta_2 = 1 - \beta_1 =\frac{|D_2|}{T}$ be the fraction of the data coming from $\calG_1$ and $\calG_2$, respectively.  We adapt the gradient updates in  \eqref{eqn:sgdupdate} to heterogeneous noise by choosing the learning rate $\step_t$ as a function of the noise level. Algorithm~\ref{alg:heterosgd} shows a modified SGD for heterogeneous learning rates.

\begin{algorithm}
\caption{SGD with varying learning rate}
\label{alg:heterosgd}
\begin{algorithmic}[1]
\STATE {\bf{Inputs:}} Oracles $\calG_1, \calG_2$ implemented by data sets $D_1, D_2$. Learning rates $\cs{1}$ and $\cs{2}$.
\STATE Set $w_1 = 0$.
\FOR{$t = 1, 2, \ldots, |D_1|$}
\STATE $w_{t+1} = \Proj{w_t - \frac{\cs{1}}{t} \calG_1(w_t)}$
\ENDFOR
\FOR{$t = |D_1| + 1, |D_1| + 2, \ldots, |D_1| + |D_2|$}
\STATE $w_{t+1} = \Proj{w_t - \frac{\cs{2}}{t}\calG_2(w_t)}$
\ENDFOR
\RETURN $w_{|D_1| + |D_2| +1}$.
\end{algorithmic}
\end{algorithm}

Consider SGD with learning rate $\step_t = \cs{1}/t$ while querying $\calG_1$ and with $\step_t = \cs{2}/t$ while querying $\calG_2$ in the update \eqref{eqn:sgdupdate}. We must choose an order in which to query $\calG_1$ and $\calG_2$ as well as the constants $\cs{1}$ and $\cs{2}$ to get the best performance.  We do this by minimizing an upper bound on the distance between the final iterate $w_{T+1}$ and the optimal solution $w^*$ to $\bbE[f(w)]$ where $f$ is defined in~\eqref{eqn:cvxopt}, and the expectation is with respect to the data distribution and the gradient noise; the upper bound we choose is based on~\citet{RakhShamir:12arxiv}. Note that for smooth functions $f$, a bound on the distance $\| w_{T+1} - w^*\|$ automatically translates to a bound on the regret $f(w_{T+1}) - f(w^*)$.

Theorem~\ref{lem:tub} generalizes the results of~\citet{RakhShamir:12arxiv} to our heterogeneous noise setting; the proof is in the supplement.

\begin{theorem}
\label{lem:tub}
If $2 \reg \cs{1} > 1$ and if $2 \reg \cs{2} \neq 1$, and if we query $\calG_1$ before $\calG_2$ with learning rates $\cs{1}/t$ and $\cs{2}/t$ respectively, then the SGD algorithm satisfies
	\begin{align} \label{eq:tub}
	& \E{\| w_{T+1} - w^* \|^2} 
	 \leq 	\frac{4\Noi{1}^2}{T} \cdot \dfrac{\beta_1^{2\reg \cs{2}-1} \cs{1}^2}{2\reg \cs{1} - 1} \nonumber \\ 
	&	+ 	\frac{4\Noi{2}^2}{T} \cdot \dfrac{(1 - \beta_1^{2\reg \cs{2}-1}) \cs{2}^2}{2\reg \cs{2}-1} 
		+ 	\calO \left( \frac{1}{T^{\min(2, 2 \reg \cs{1})}} \right).
	\end{align}
\end{theorem}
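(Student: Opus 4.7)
The plan is to combine the standard strongly-convex SGD squared-distance recursion with a two-phase unrolling matched to the learning-rate switch at $t = |D_1|$. Writing $a_t := \E{\|w_t - w^*\|^2}$, expanding the projected update, applying nonexpansiveness of $\Pi_\calW$, using unbiasedness $\E{\calG_t(w_t)\mid w_t} = \nabla f(w_t)$ together with $\reg$-strong convexity (which gives $\langle \nabla f(w_t), w_t - w^*\rangle \geq \reg\, a_t$ because $\nabla f(w^*)=0$), and the oracle variance bound $\E{\|\calG_t(w_t)\|^2}\leq \Gamma_t^2$, yield the one-step inequality
\begin{align*}
a_{t+1} \leq (1 - 2\reg\step_t)\,a_t + \step_t^2 \Gamma_t^2,
\end{align*}
where $(\step_t,\Gamma_t) = (\cs{1}/t,\Noi{1})$ for $t\leq |D_1|$ and $(\cs{2}/t,\Noi{2})$ for $|D_1| < t\leq T$.

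Phase 1 is precisely the single-oracle regime analyzed by Rakhlin, Shamir, and Sridharan; under $2\reg\cs{1} > 1$, their induction (based on $\prod_{s=k+1}^{m}(1 - 2\reg\cs{1}/s) \leq (k/m)^{2\reg\cs{1}}$) gives $a_{|D_1|+1} \leq \frac{4\cs{1}^2 \Noi{1}^2}{(2\reg\cs{1}-1)|D_1|} + \calO(|D_1|^{-\min(2,\,2\reg\cs{1})})$. Unrolling the recursion across phase 2 then yields
\begin{align*}
a_{T+1} \leq a_{|D_1|+1}\prod_{t=|D_1|+1}^{T}(1 - 2\reg\cs{2}/t) + \sum_{t=|D_1|+1}^{T}\frac{\cs{2}^2\Noi{2}^2}{t^2}\prod_{s=t+1}^{T}(1 - 2\reg\cs{2}/s).
\end{align*}
Bounding $\prod_{s=k+1}^{T}(1-2\reg\cs{2}/s)\leq (k/T)^{2\reg\cs{2}}$, the carry-over term is at most $a_{|D_1|+1}\cdot \beta_1^{2\reg\cs{2}}$, which combined with the phase-1 bound equals $\frac{4\cs{1}^2\Noi{1}^2\,\beta_1^{2\reg\cs{2}-1}}{(2\reg\cs{1}-1)T}$, matching the first term of \eqref{eq:tub}. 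The residual noise sum becomes $\frac{\cs{2}^2\Noi{2}^2}{T^{2\reg\cs{2}}}\sum_{t=|D_1|+1}^{T} t^{2\reg\cs{2}-2}$, and the integral estimate $\sum t^{2\reg\cs{2}-2}\approx (T^{2\reg\cs{2}-1} - |D_1|^{2\reg\cs{2}-1})/(2\reg\cs{2}-1)$ delivers the second term $\frac{4\cs{2}^2\Noi{2}^2(1-\beta_1^{2\reg\cs{2}-1})}{(2\reg\cs{2}-1)T}$ (absorbing a constant into the 4). Integration residuals, the initial-distance contribution $a_1\prod_{t=1}^{T}(1-2\reg\step_t)$, and the transient remainder inherited from phase 1 (further decayed by $\beta_1^{2\reg\cs{2}}\leq 1$) are then collected into $\calO(T^{-\min(2,\,2\reg\cs{1})})$.

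The main obstacle is handling the subcritical regime $2\reg\cs{2} < 1$ uniformly with the supercritical one. In that regime the product bound $(k/T)^{2\reg\cs{2}}$ is only mildly decreasing and the sum $\sum t^{2\reg\cs{2}-2}$ is dominated by its lower endpoint $|D_1|^{2\reg\cs{2}-1}$ rather than $T^{2\reg\cs{2}-1}$; nevertheless both $1-\beta_1^{2\reg\cs{2}-1}$ and $2\reg\cs{2}-1$ change sign simultaneously, so the closed-form ratio stays positive and correct, and the hypothesis $2\reg\cs{2}\neq 1$ is needed precisely to prevent the denominator from vanishing. A secondary technicality is that the product bound $\prod(1-x_s)\leq \exp(-\sum x_s)$ requires nonnegative factors, so the finitely many indices $s\leq 2\reg\cs{1}$ (respectively $s\leq 2\reg\cs{2}$) must be treated separately; as in the Rakhlin--Shamir--Sridharan analysis these contribute only to the transient and are absorbed into the stated $\calO(T^{-\min(2,\,2\reg\cs{1})})$ remainder.
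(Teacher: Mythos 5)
Your proposal is correct and follows essentially the same route as the paper's proof: the same one-step recursion $a_{t+1}\le(1-2\reg\step_t)a_t+\step_t^2\Gamma_t^2$ from strong convexity and nonexpansive projection, the same unrolling with products bounded via $1-x\le e^{-x}$ and harmonic sums, and the same integral comparisons yielding the two leading terms and the $\calO(T^{-\min(2,2\reg\cs{1})})$ transient. The only cosmetic difference is that you package phase~1 as a black-box Rakhlin--Shamir--Sridharan bound and propagate it through the phase-2 contraction $\beta_1^{2\reg\cs{2}}$, whereas the paper unrolls both phases in a single induction and bounds the three resulting terms separately; the resulting computations are identical, and you correctly flag the same technicalities (the $i_0$ cutoff for negative factors and the simultaneous sign change of $1-\beta_1^{2\reg\cs{2}-1}$ and $2\reg\cs{2}-1$ in the subcritical regime).
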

\begin{proof} (\textit{Sketch})
Let $g(w)$ be the true gradient $\nabla f(w)$ and $\hat g(w)$ be the unbiased noisy gradient provided by the oracle $\calG_1$ or $\calG_2$, whichever is queried.\\
By strong convexity of $f$, we have
\begin{align*}
\Esub{1}{t}{\|w_{t+1}-w^*\|^2}\le (1-2\reg \step_t)\Esub{1}{t}{\|w_{t}-w^*\|^2} + \step_t^2 \noi{t}^2.
\end{align*}
Solving it inductively, with $\noi{t} = \Noi{1}, \step_t=\cs{1}/t$ for $t \le \beta_1 T$ and $\noi{t} = \Noi{2}, \step_t=\cs{2}/t$ for $t > \beta_1 T$, we have
\begin{align*}
\Esub{1}{T}{\|w_{T+1}-w^*\|^2}
\le& \prod_{i=i_0}^{\beta_1 T} \left( 1-\dfrac{2\reg \cs{1}}{i} \right) \prod_{i=\beta_1 T+1}^{T} \left(1-\dfrac{2\reg \cs{2}}{i} \right)  \Esub{1}{T}{\|w_{i_0}-w^*\|^2} \\
&+ \Noi{1}^2 \prod_{i=\beta_1 T+1}^{T} \left(1-\dfrac{2\reg \cs{2}}{i} \right) \sum_{i=i_0}^{\beta_1 T}\dfrac{\cs{1}^2}{i^2} \prod_{j=i+1}^{\beta_1 T} \left(1-\dfrac{2\reg \cs{1}}{j} \right) \\
&+ \Noi{2}^2 \sum_{i=\beta_1 T+1}^{T}\dfrac{\cs{2}^2}{i^2} \prod_{j=i+1}^{T} \left(1-\dfrac{2\reg \cs{2}}{j} \right),
\end{align*}
where $i_0$ is the smallest positive integer such that $2\reg \step_{i_0}<1$, i.e, $i_0 = \lceil 2\cs{1}\reg \rceil$. \\
Then using $1-x \leq e^{-x}$ and upper bounding each term using integrals we get the \eqref{eq:tub}.
\end{proof}
Two remarks are in order. First, observe that the first two terms in the right hand side dominate the other term. Second, our proof techniques for Theorem~\ref{lem:tub}, adapted from~\cite{RakhShamir:12arxiv}, require that $2 \reg \cs{1} > 1$ in order to get a $O(1/T)$ rate of convergence; without this condition, the dependence on $T$ is $\Omega(1/T)$. 

\subsection{Algorithm description}
\label{sect:t_algDescrip}
Our algorithm for selecting $\cs{1}$ and $\cs{2}$ is motivated by Theorem~\ref{lem:tub}. We propose an algorithm that selects $\cs{1}$ and $\cs{2}$ by minimizing the quantity $B(c_1, c_2)$ which represents the highest order terms in Theorem~\ref{lem:tub}:
\begin{align}
B(c_1, c_2) = \frac{4 \Noi{1}^2 \beta_1^{2 \reg c_2 - 1} c_1^2}{T (2 \reg c_1 - 1)} + \frac{4 \Noi{2}^2 (1 - \beta_1^{2 \reg c_2 - 1}) c_2^2}{T( 2 \reg c_2 - 1) }.
\label{eq:orderopt_t}
\end{align}
Given $\reg$, $\Noi{1}$, $\Noi{2}$ and $\beta_1$, we use $c_1^*$ and $c_2^*$ to denote the values of $c_1$ and $c_2$ that minimize $B(c_1, c_2)$. We can optimize for fixed $c_2$ with respect to $c_1$ by minimizing $\frac{c_1^2}{2 \reg c_1 - 1}$; this gives $c_1^* = 1/\reg$, and $\frac{c_1^{*2}}{2\reg\cs{1}^* - 1} = 1/\reg^2$, which is independent of $\beta_1$ or the noise levels $\Noi{1}$ and $\Noi{2}$. Minimizing $B(c_1^*, c_2)$ with respect to $c_2$ can be now performed numerically to yield $c_2^* = \argmin_{c_2} B(c_1^*, c_2)$.  This yields optimal values of $\cs{1}$ and $\cs{2}$.

Now suppose we have two oracles $\GC$, $\GN$ with noise levels $\NoiC$ and $\NoiN$ that are implemented based on datasets $\DC$ and $\DN$ respectively. Let $\NoiC < \NoiN$, and let $\dfracC = \frac{|\DC|}{|\DC| + |\DN|}$ and $\dfracN = \frac{|\DN|}{|\DC| + |\DN|}$ be the fraction of the total data in each data set.  
Define the following functions:
\begin{align*}
H_{\mathsf{CN}}(c) = \frac{4\NoiC^2 \dfracC^{2\reg c - 1}}{\reg^2} + \frac{4\NoiN^2 (1 - \dfracC^{2\reg c - 1}) c^2}{2\reg c - 1},\\
H_{\mathsf{NC}}(c) = \frac{4\NoiN^2 \dfracN^{2\reg c - 1}}{\reg^2} + \frac{4\NoiC^2 (1 - \dfracN^{2\reg c - 1}) c^2}{2\reg c - 1}.
\end{align*}

These represent the constant of the leading term in the upper bound in Theorem~\ref{lem:tub} for $(\calG_1,\calG_2) = (\GC,\GN)$ and $(\calG_1,\calG_2) = (\GN,\GC)$, respectively.\\
Algorithm \ref{alg:tlearningrate} repeats the process of choosing optimal $\cs{1},\cs{2}$ with two orderings of the data -- $\GC$ first and $\GN$ first -- and selects the solution which provides the best bounds (according to the higher order terms of Theorem~\ref{lem:tub}). 

\begin{algorithm}
\caption{Selecting the Learning Rates}
\label{alg:tlearningrate}
\begin{algorithmic}[1]
\STATE {\bf{Inputs:}} Data sets $\DC$ and $\DN$ accessed through oracles $\GC$ and $\GN$ with noise levels $\NoiC$ and $\NoiN$.
\STATE Let $\dfracC = \frac{|\DC|}{|\DC| + |\DN|}$ and $\dfracN = \frac{|\DN|}{|\DC| + |\DN|}$. 
\STATE Calculate $c_{\mathsf{CN}} = \argmin_{c} H_{\mathsf{CN}}(c)$
and $c_{\mathsf{NC}} = \argmin_{c} H_{\mathsf{NC}}(c)$.
\IF{$H_{\mathsf{CN}}(c_{\mathsf{CN}}) \le H_{\mathsf{NC}}(c_{\mathsf{NC}})$}
\STATE Run Algorithm~\ref{alg:heterosgd} using oracles $(\GC,\GN)$, learning rates $\cs{1} = \frac{1}{\reg}$ and $\cs{2} = c_{\mathsf{CN}}$.
\ELSE
\STATE Run Algorithm~\ref{alg:heterosgd} using oracles $(\GN,\GC)$, learning rates $\cs{1} = \frac{1}{\reg}$ and $\cs{2} = c_{\mathsf{NC}}$.
\ENDIF
\end{algorithmic}
\end{algorithm}

\subsection{Regret Bounds}

To provide a regret bound on the performance of SGD with two learning rates, we need to plug the optimal values of $c_1$ and $c_2$ into the right hand side of~\eqref{eq:orderopt_t}. Observe that as $c_1 = c_2$ and $c_2 = 0$ are feasible inputs to~\eqref{eq:orderopt_t}, our algorithm by construction has a superior regret bound than using a single learning rate only, or using clean data only. 

Unfortunately, the value of $c_2$ that minimizes~\eqref{eq:orderopt_t} does not have a closed form solution, and as such it is difficult to provide a general simplified regret bound that holds for all $\Noi{1}$, $\Noi{2}$ and $\beta_1$. In this section, we consider two cases of interest, and derive simplified versions of the regret bound for SGD with two learning rates for these cases. 

We consider the two data orders $(\Noi{1},\Noi{2}) = (\NoiN,\NoiC)$ and $(\Noi{1},\Noi{2}) = (\NoiC,\NoiN)$ in a scenario where $\NoiN/\NoiC \gg 1$ and both $\dfracN$ and $\dfracC$ are bounded away from $0$ and $1$.  That is, the noisy data is much noisier.  The following two lemmas provide upper and lower bounds on $B(c_1^*, c_2^*)$ in this setting.

\begin{lemma}\label{lem:largenoi1}
Suppose $(\Noi{1},\Noi{2}) = (\NoiN,\NoiC)$ and $0 < \dfracN < 1$.  Then for sufficiently large $\NoiN/\NoiC$, the optimal solution $c_2^*$ to~\eqref{eq:orderopt_t} satisfies
	\begin{align*} 
2 c_2^* \reg \in & \Bigg[ 1 + \frac{2\log(\NoiN/\NoiC) + \log\log(1/\dfracN)}{\log (1/\dfracN)} , \\
&  1 + \frac{ 2 \log (4 \NoiN/\NoiC) + \log \log (1/\dfracN)}{\log(1/\dfracN)} \Bigg].
\end{align*}
Moreover, $B(c_1^*, c_2^*)$ satisfies:
	\begin{align*} 
	B(c_1^*, c_2^*) \geq & \frac{ 4\NoiC^2 (\log( \frac{\NoiN}{\NoiC}) 
		+ \frac{1}{2} \log \log \frac{1}{\dfracN})
		}{
		\reg^2 T \log(\frac{1}{\dfracN})
		} \\
	B(c_1^*, c_2^*) 
	\leq &
	\frac{4\NoiC^2} {\reg^2 T} \left( 4  %
		+ \frac{ 4 + 2 \log(\frac{\NoiN}{\NoiC}) + \log \log (\frac{1}{\dfracN})
		}{
		\log (\frac{1}{\dfracN})}\right).
	\end{align*}
\end{lemma}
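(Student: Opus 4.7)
The plan is to reduce the minimization of $B(c_1, c_2)$ from~\eqref{eq:orderopt_t} to a one-dimensional problem. Using the observation from Section~\ref{sect:t_algDescrip} that $c_1^{*}=1/\reg$ minimizes $c_1^{2}/(2\reg c_1-1)$ with value $1/\reg^{2}$, independently of $c_2$, and specializing to $(\Noi{1},\Noi{2})=(\NoiN,\NoiC)$ with $\beta_1=\dfracN$, the upper bound becomes a scalar function of $c_2>1/(2\reg)$. Reparametrizing by $\alpha = 2\reg c_2 - 1 > 0$ and writing $L=\log(1/\dfracN)$, $A=4\NoiN^2/(T\reg^2)$, $C=\NoiC^2/(T\reg^2)$, the objective to minimize over $\alpha$ is
\begin{equation*}
\widetilde B(\alpha) \;=\; A\,\dfracN^{\alpha} \;+\; \frac{C\,(1-\dfracN^{\alpha})(1+\alpha)^2}{\alpha}.
\end{equation*}
The lemma thus reduces to locating $\alpha^{*}=2\reg c_2^{*}-1$ and bounding $\widetilde B(\alpha^{*})$.

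The first step is to localize $\alpha^{*}$. A direct computation (using $(1+\alpha)(\alpha-1)/\alpha^2 = 1-1/\alpha^2$) yields
\begin{equation*}
\frac{d\widetilde B}{d\alpha} \;=\; -AL\dfracN^{\alpha} \;+\; CL\dfracN^{\alpha}\,\frac{(1+\alpha)^2}{\alpha} \;+\; C(1-\dfracN^{\alpha})\left(1-\frac{1}{\alpha^2}\right).
\end{equation*}
The endpoints in the lemma are designed so that $\dfracN^{\alpha_{\mathrm{lo}}}=\NoiC^{2}/(\NoiN^{2}L)$ and $\dfracN^{\alpha_{\mathrm{hi}}}=\NoiC^{2}/(16L\NoiN^{2})$. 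Plugging these in, $AL\dfracN^{\alpha_{\mathrm{lo}}}=4C$ and $AL\dfracN^{\alpha_{\mathrm{hi}}}=C/4$, while the middle term is $O(C\NoiC^{2}/\NoiN^{2})$ and negligible. For $\NoiN/\NoiC$ sufficiently large so that $\alpha_{\mathrm{lo}},\alpha_{\mathrm{hi}}\gg 1$, we also have $1-\dfracN^{\alpha}\approx 1$ and $1-1/\alpha^{2}\approx 1$, making the last term $\approx C$. Thus $d\widetilde B/d\alpha|_{\alpha_{\mathrm{lo}}}\approx -3C<0$ and $d\widetilde B/d\alpha|_{\alpha_{\mathrm{hi}}}\approx 3C/4>0$. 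Since $d\widetilde B/d\alpha\to -AL<0$ as $\alpha\to 0^{+}$, $\widetilde B(\alpha)\to\infty$ as $\alpha\to\infty$, and (on the relevant range) the $\dfracN^{\alpha}$ contributions to the derivative die off while the $C(1-1/\alpha^{2})$ term is monotonically increasing, any minimizer must lie in $[\alpha_{\mathrm{lo}},\alpha_{\mathrm{hi}}]$.

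The bounds on $\widetilde B(\alpha^{*})$ now follow. For the upper bound, $\widetilde B(\alpha^{*})\le \widetilde B(\alpha_{\mathrm{hi}})$: plugging in, the first term is $C/(4L)$ and the second is at most $C(\alpha_{\mathrm{hi}}+2+1/\alpha_{\mathrm{hi}})$; unfolding $\alpha_{\mathrm{hi}} = (2\log(4\NoiN/\NoiC)+\log L)/L$, absorbing $2\log 4 \le 4$ and the residual $O(1)$ slack into the stated constant ``$4$'', yields the lemma's upper bound. For the lower bound, note that the second term of $\widetilde B$ is monotonically increasing on $[1,\infty)$ since its derivative $L\dfracN^{\alpha}(1+\alpha)^{2}/\alpha + (1-\dfracN^{\alpha})(1-1/\alpha^{2})$ is a sum of two nonnegative terms. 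Therefore $\widetilde B(\alpha^{*}) \ge C(1-\dfracN^{\alpha_{\mathrm{lo}}})(1+\alpha_{\mathrm{lo}})^{2}/\alpha_{\mathrm{lo}}$, and in the large-$\NoiN/\NoiC$ regime $1-\dfracN^{\alpha_{\mathrm{lo}}}\approx 1$ and $(1+\alpha_{\mathrm{lo}})^{2}/\alpha_{\mathrm{lo}} \ge \alpha_{\mathrm{lo}}+2$, so substituting the definition of $\alpha_{\mathrm{lo}}$ and collecting terms gives the stated bound.

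The main obstacle is making the derivative-sign argument rigorous at the precise endpoints: since $\alpha^{*}$ admits no closed form, the whole localization rests on (i) controlling the three competing terms of $d\widetilde B/d\alpha$ so that the sign changes exactly between $\alpha_{\mathrm{lo}}$ and $\alpha_{\mathrm{hi}}$, and (ii) ruling out the possibility of multiple local minima, which requires carefully tracking the monotonicity of each subterm in the relevant range. The hypothesis ``for $\NoiN/\NoiC$ sufficiently large'' is precisely what is needed to make the $\dfracN^{\alpha}$-dependent and $1/\alpha^{2}$-dependent residuals vanish so that the sign analysis goes through with the clean endpoint values given in the lemma.
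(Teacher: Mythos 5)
Your proposal follows essentially the same route as the paper's proof: fix $c_1^*=1/\reg$, reparametrize by $\alpha=2\reg c_2-1$, differentiate the resulting one--variable objective, show the derivative is negative at the lower endpoint and positive at the upper endpoint using exactly the identities $\dfracN^{\alpha_{\mathrm{lo}}}=\NoiC^2/(\NoiN^2\log(1/\dfracN))$ and $\dfracN^{\alpha_{\mathrm{hi}}}=\NoiC^2/(16\NoiN^2\log(1/\dfracN))$, and then bound $B$ by evaluating at an endpoint. The only cosmetic differences are that the paper plugs in the lower endpoint for the upper bound where you use the upper one, and both arguments share the same residual looseness (the step from ``derivative changes sign on $[\alpha_{\mathrm{lo}},\alpha_{\mathrm{hi}}]$'' to ``the global minimizer lies there,'' and the precise constant in the stated lower bound, are not fully pinned down in either version).
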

\begin{proof}\textit{(Sketch)}
We prove that for any $2\reg\cs{2} \leq 1 + \frac{2\log(\NoiN/\NoiC) + \log\log(1/\dfracN)}{\log (1/\dfracN)}$, $B(\cs{1}^*, \cs{2})$ is decreasing with respect to $\cs{2}$ when $\NoiN/\NoiC$ is sufficiently large; and for any $2\reg\cs{2} \geq 1 + \frac{ 2 \log (4 \NoiN/\NoiC) + \log \log (1/\dfracN)}{\log(1/\dfracN)}$,  $B(\cs{1}^*, \cs{2})$ is increasing when $\NoiN/\NoiC$ is sufficiently large. Therefore the minimum of $B(\cs{1}^*, \cs{2})$ is achieved when $2\reg\cs{2}$ is in between. 
\end{proof}
Observe that the regret bound grows logarithmically with $\NoiN/\NoiC$. Moreover, if we only used the cleaner data, then the regret bound would be $\frac{4 \NoiC^2}{\reg^2 \dfracC T}$, which is better, especially for large $\NoiN/\NoiC$. This means that using two learning rates with the noisy data first gives poor results at high noise levels.  

Our second bound takes the opposite data order, processing the clean data first.

\begin{lemma}\label{lem:largenoi2}
Suppose $(\Noi{1},\Noi{2}) = (\NoiC,\NoiN)$ and $0 < \dfracC < 1$. Let $\sigma = (\NoiN/\NoiC)^{-2}$. Then for sufficiently large $\NoiN/\NoiC$, the optimal solution $c_2^*$ to~\eqref{eq:orderopt_t} satisfies: $2 c_2^* \reg \in \left[ \sigma, \frac{8}{\dfracC} \sigma \right]$. 
Moreover, $B(c_1^*, c_2^*)$ satisfies:
	\begin{align*}
	B(c_1^*, c_2^*) \geq & \frac{4 \NoiC^2}{ \reg^2 \dfracC T} \dfracC^{  8 \sigma/\dfracC } \\
	B(c_1^*, c_2^*) \leq & \frac{4 \NoiC^2}{ \reg^2 \dfracC T} 
		\dfracC^{ \sigma }
		\left( 1 + \sigma \frac{\log(1/\dfracC)}{4} \right).
	\end{align*} 
\end{lemma}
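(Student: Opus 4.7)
The plan is to mirror the structure of Lemma~\ref{lem:largenoi1}: substitute the optimal $c_1^* = 1/\reg$, reduce $B(c_1^*,c_2)$ to a single-variable function, localize its minimizer, and sandwich the minimum value between the claimed bounds.

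\smallskip

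First I will perform the reduction. With $c_1^* = 1/\reg$, the change of variables $x = 2\reg c_2$ together with $\sigma = (\NoiN/\NoiC)^{-2}$ gives, by direct substitution into \eqref{eq:orderopt_t},
\begin{equation*}
B(c_1^*, c_2) \;=\; \frac{4\NoiC^2}{T\reg^2}\, g(x), \qquad g(x) \;:=\; \dfracC^{x-1} + \frac{(1 - \dfracC^{x-1})\, x^2}{4\sigma\,(x-1)}.
\end{equation*}
For $x \in (0,1)$ both factors in the numerator and denominator of the second summand are negative, so $g(x) > 0$, and the problem reduces to minimizing $g$ over positive $x$ with $x \neq 1$.

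\smallskip

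To locate $x^*$, I will evaluate $g'$ at the two proposed endpoints. For small $x$, Taylor expanding $\dfracC^{x-1} = \dfracC^{-1}(1 + x\log\dfracC + O(x^2))$ makes the first summand of $g$ decrease linearly in $x$ with slope $\dfracC^{-1}\log\dfracC < 0$, while the second summand expands as $x^2(1-\dfracC)/(4\sigma\dfracC) + O(x^3/\sigma)$. The balance predicts a minimizer at $x \sim 2\sigma\log(1/\dfracC)/(1-\dfracC)$. A careful derivative computation, valid once $\sigma$ is small enough, then yields $g'(\sigma) < 0$ and $g'(8\sigma/\dfracC) > 0$. For the upper endpoint I will use the elementary inequality $\log(1/\dfracC)/(1-\dfracC) \leq 4/\dfracC$, which holds for every $\dfracC \in (0,1)$ and ensures $8\sigma/\dfracC$ lies above the balance point. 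Combining the two sign statements pins $x^*$ to $[\sigma, 8\sigma/\dfracC]$.

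\smallskip

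For the upper bound on $B$, I use $g(x^*) \leq g(\sigma)$. Direct substitution gives
\begin{equation*}
g(\sigma) \;=\; \dfracC^{\sigma-1}\Bigl(1 + \tfrac{\sigma}{4(1-\sigma)}\Bigr) - \tfrac{\sigma}{4(1-\sigma)},
\end{equation*}
and the elementary bound $\dfracC^{1-\sigma} = e^{-(1-\sigma)\log(1/\dfracC)} \geq 1 - (1-\sigma)\log(1/\dfracC)$ rearranges to $g(\sigma) \leq \dfracC^{\sigma-1}\bigl(1 + \sigma\log(1/\dfracC)/4\bigr)$. Multiplying by $4\NoiC^2/(T\reg^2)$ matches the claimed upper bound exactly. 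For the lower bound, since $x^* < 1$ (which holds once $\sigma < \dfracC/8$), the second summand of $g$ is non-negative, so $g(x^*) \geq \dfracC^{x^*-1}$. Monotonicity of $x \mapsto \dfracC^{x-1}$ together with $x^* \leq 8\sigma/\dfracC$ from the previous step yields $g(x^*) \geq \dfracC^{8\sigma/\dfracC - 1}$, which after the prefactor multiplication gives the claimed lower bound.

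\smallskip

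The main obstacle is the derivative analysis that pins down $x^*$. The derivative $g'(x)$ is a combination of exponentials in $x$ and rational functions with a singularity at $x=1$, and the intuition from the leading-order Taylor expansion in $x$ and $\sigma$ must be promoted to rigorous sign inequalities. In practice this means collecting the Taylor remainders carefully, bounding each by a power of $\sigma$ uniformly in $\dfracC \in (0,1)$, and verifying that the dominant sign at each endpoint is preserved for sufficiently large $\NoiN/\NoiC$. Once both endpoint sign inequalities are in hand, the location of $x^*$ follows by continuity of $g'$, and the remaining steps are essentially algebraic.
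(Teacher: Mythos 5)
Your proposal follows essentially the same route as the paper's proof: substitute $c_1^*=1/\reg$, reduce $B(c_1^*,c_2)$ to a one-variable function of $x=2\reg c_2$, localize the minimizer by showing the derivative is negative at $x=\sigma$ and positive at $x=8\sigma/\dfracC$, then get the upper bound by evaluating at the left endpoint (your use of $e^{-y}\ge 1-y$ is exactly the paper's step) and the lower bound by dropping the nonnegative second summand and using monotonicity of $x\mapsto\dfracC^{x-1}$ together with $x^*\le 8\sigma/\dfracC$. The only difference is in how the endpoint derivative signs are certified: where you propose Taylor expansion with remainder tracking, the paper uses the exact inequalities $1-\dfracC^{1-x}\le(1-x)\log(1/\dfracC)$ and $1-\dfracC^{1-x}\ge\dfracC\,(1-x)\log(1/\dfracC)$, which give the signs directly (with no condition at all at $x=\sigma$, and under an explicit smallness condition of the form $8\sigma/\dfracC\le\dfracC/4$ at the right endpoint), so you may wish to adopt that device rather than fight the remainders.
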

\begin{proof}\textit{(Sketch)}
Similar as the proof of Lemma \ref{lem:largenoi1}, we prove that for any $2\reg\cs{2} \leq  \sigma$, $B(\cs{1}^*, \cs{2})$ is decreasing with respect to $\cs{2}$; and for any $2\reg\cs{2} \geq \frac{8}{\dfracC} \sigma$,  $B(\cs{1}^*, \cs{2})$ is increasing when $\NoiN/\NoiC$ is sufficiently large. Therefore the minimum of $B(\cs{1}^*, \cs{2})$ is achieved when $2\reg\cs{2}$ is in between. 
\end{proof}

If we only used the clean dataset, then the regret bound would be $\frac{4 \NoiC^2}{ \reg^2 \dfracC T}$, so Lemma~\ref{lem:largenoi2} yields an improvement by a factor of $\dfracC^{ (\NoiN/\NoiC)^{-2}} \left( 1 + \left( \frac{\NoiN}{\NoiC} \right)^{-2} \frac{\log(1/\dfracC)}{4} \right)$. As $\dfracC < 1$, observe that this factor is always less than $1$, and tends to $1$ as $\NoiN/\NoiC$ tends to infinity; therefore the difference between the regret bounds narrows as the noisy data grows noisier. We conclude that using two learning rates with clean data first gives a better regret bound than using only clean data or using two learning rates with noisy data first.

\section{Experiments \label{sec:expts}}

We next illustrate our theoretical results through experiments on real data. We consider the task of training a regularized logistic regression classifier for binary classification under local differential privacy. For our experiments, we consider two real datasets -- \MNIST\ (with the task 1 vs. Rest) and \Covertype\ (Type 2 vs. Rest). The former consists of $60,000$ samples in $784$ dimensions, while the latter consists of $500,000$ samples in $54$-dimensions. We reduce the dimension of the \MNIST\ dataset to $25$ via random projections.  

To investigate the effect of heterogeneous noise, we divide the training data into subsets $(\DC,\DN)$ to be accessed through oracles $(\GC,\GN)$ with privacy parameters $(\epsC,\epsN)$ respectively. We pick $\epsC > \epsN$, so $\GN$ is noisier than $\GC$. To simulate typical practical situations where cleaner data is rare, we set the size of $\DC$ to be $\dfracC = 10\%$ of the total data size.  We set the regularization parameter $\lambda = 10^{-3}$, $\Gamma_C$ and $\Gamma_N$ according to Theorem~\ref{thm:Glocaldp} and use SGD with mini-batching (batch size $50$). 

\begin{figure*}[!t]
\vspace{-5pt}
\setlength{\abovecaptionskip}{0pt}
\centering
\subfigure[]{
\includegraphics[width=0.31\textwidth]{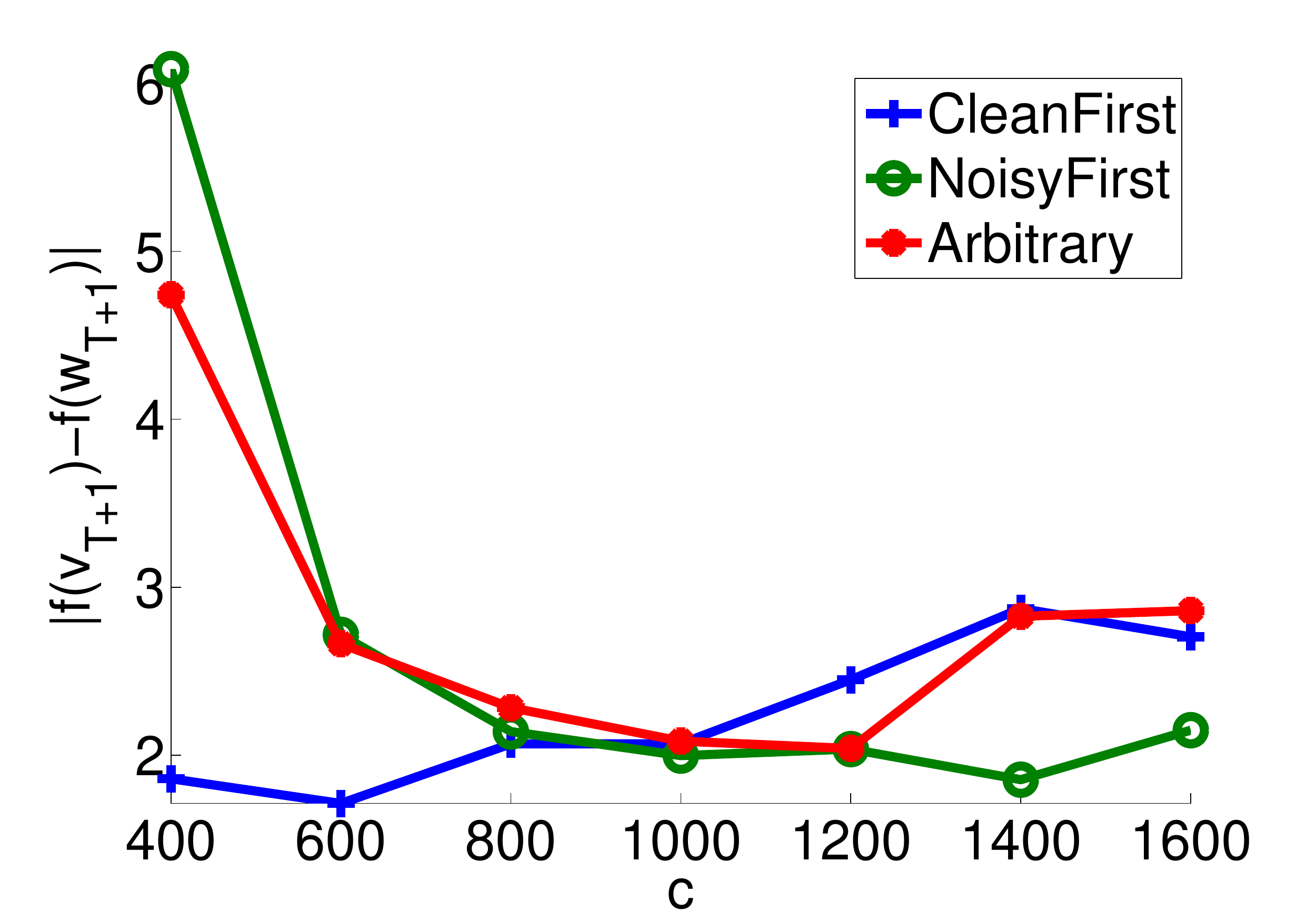}
\label{fig:ordermnist}}
\subfigure[]{
\includegraphics[width=0.31\textwidth]{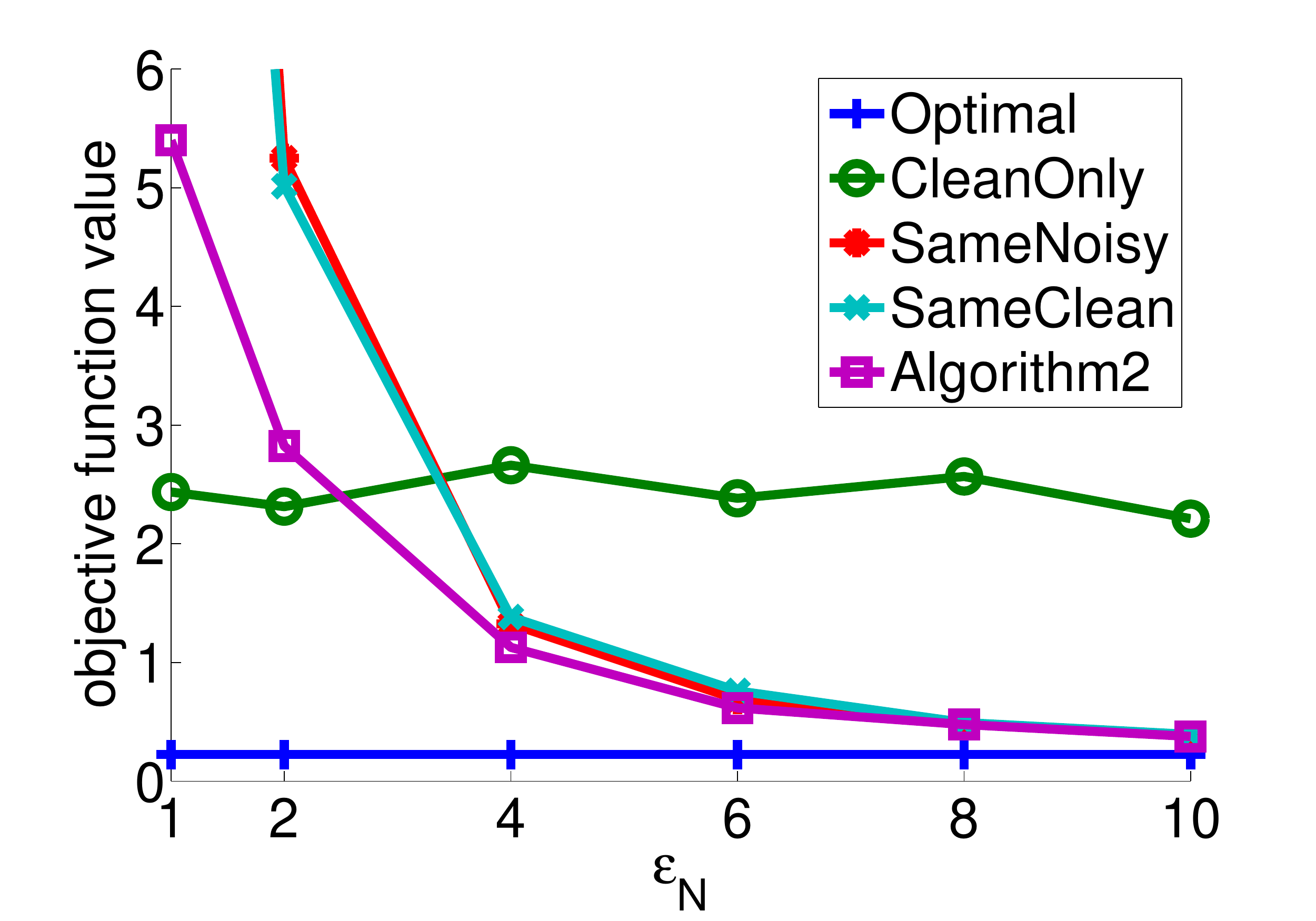}
\label{fig:tworatesa}}
\subfigure[]{
\includegraphics[width=0.31\textwidth]{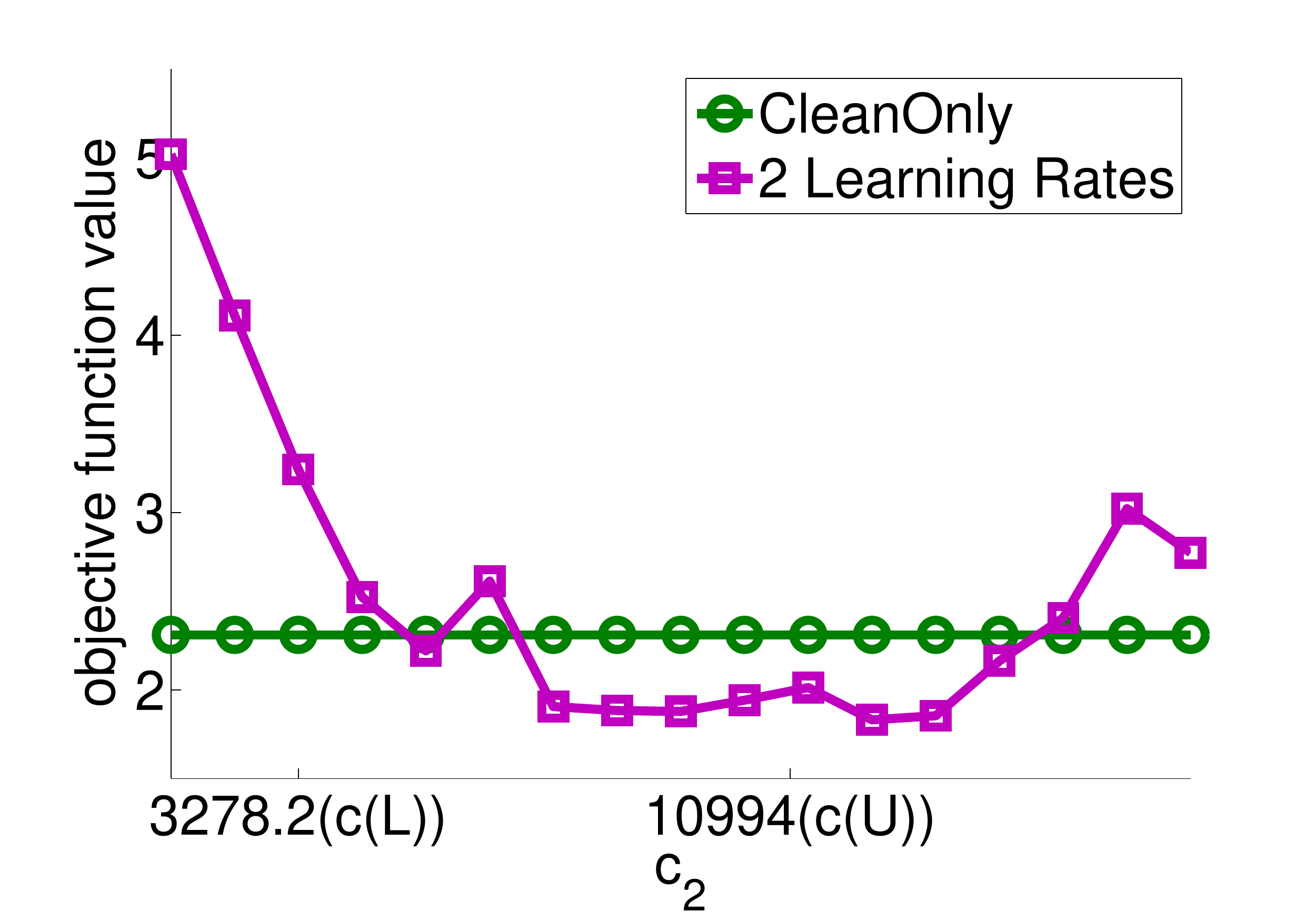}
\label{fig:varyca}}\\
\vspace{-0pt}
\subfigure[]{
\includegraphics[width=0.31\textwidth]{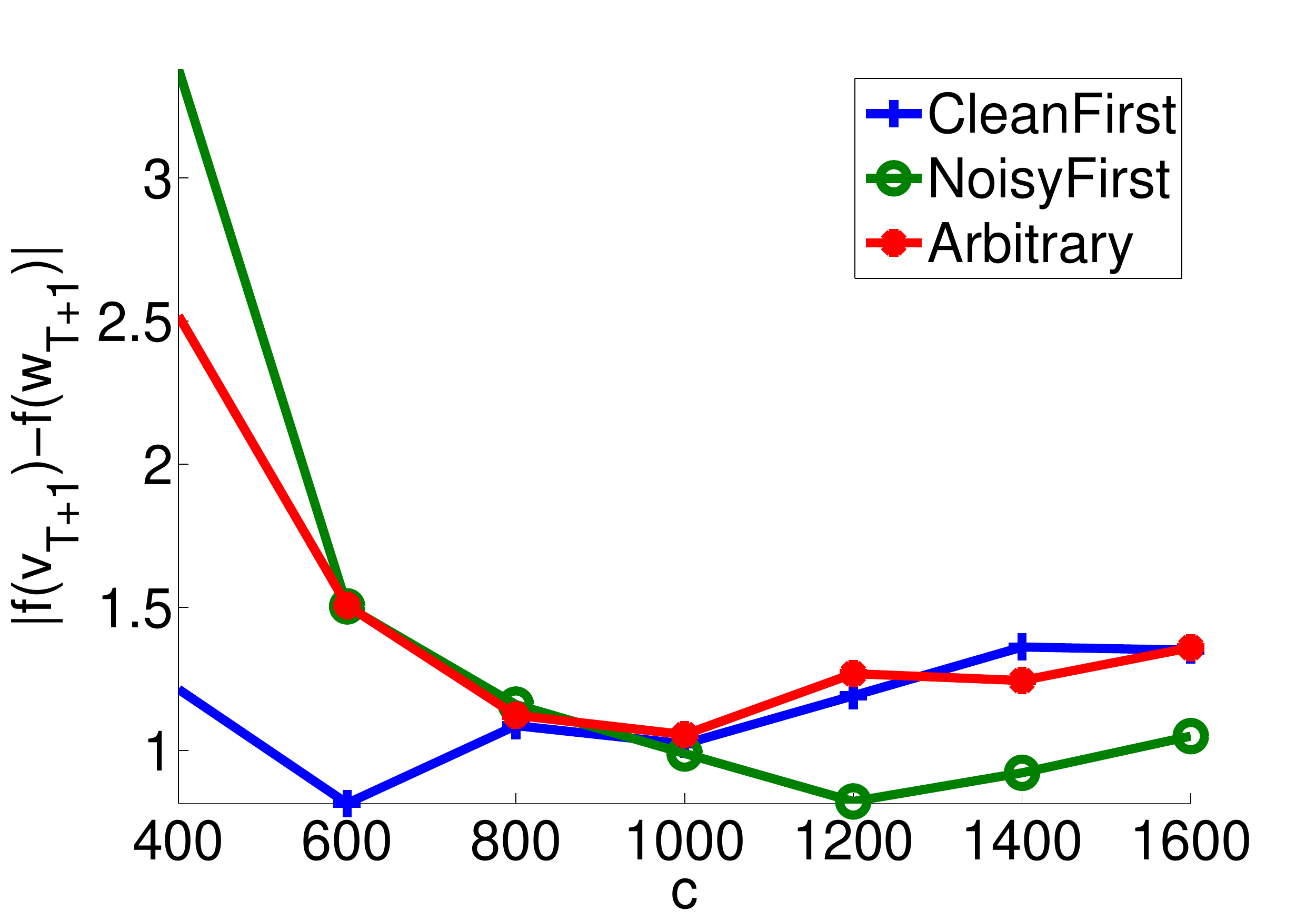}
\label{fig:ordercovtype}}
\subfigure[]{
\includegraphics[width=0.31\textwidth]{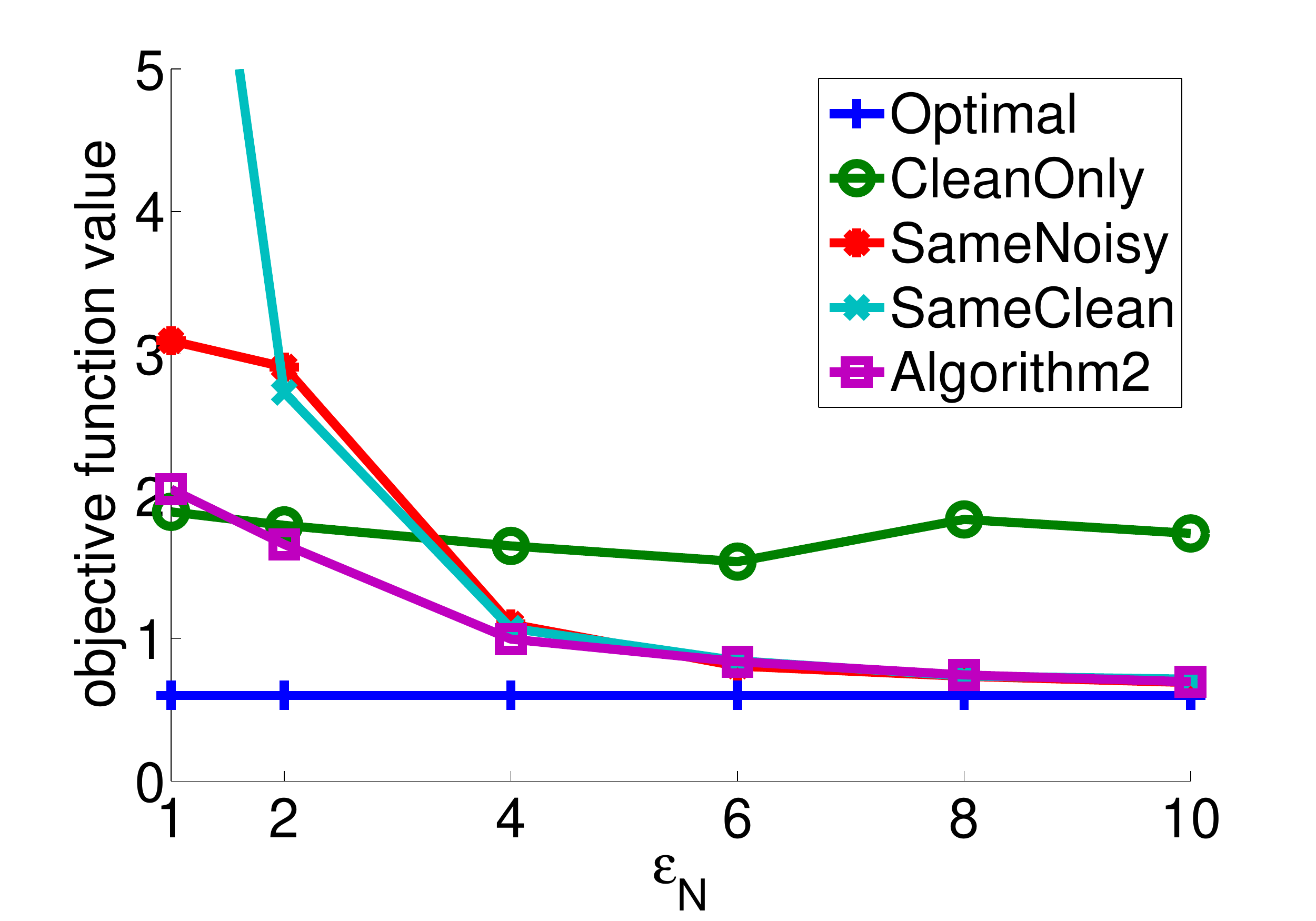}
\label{fig:tworatesb}}
\subfigure[]{
\includegraphics[width=0.31\textwidth]{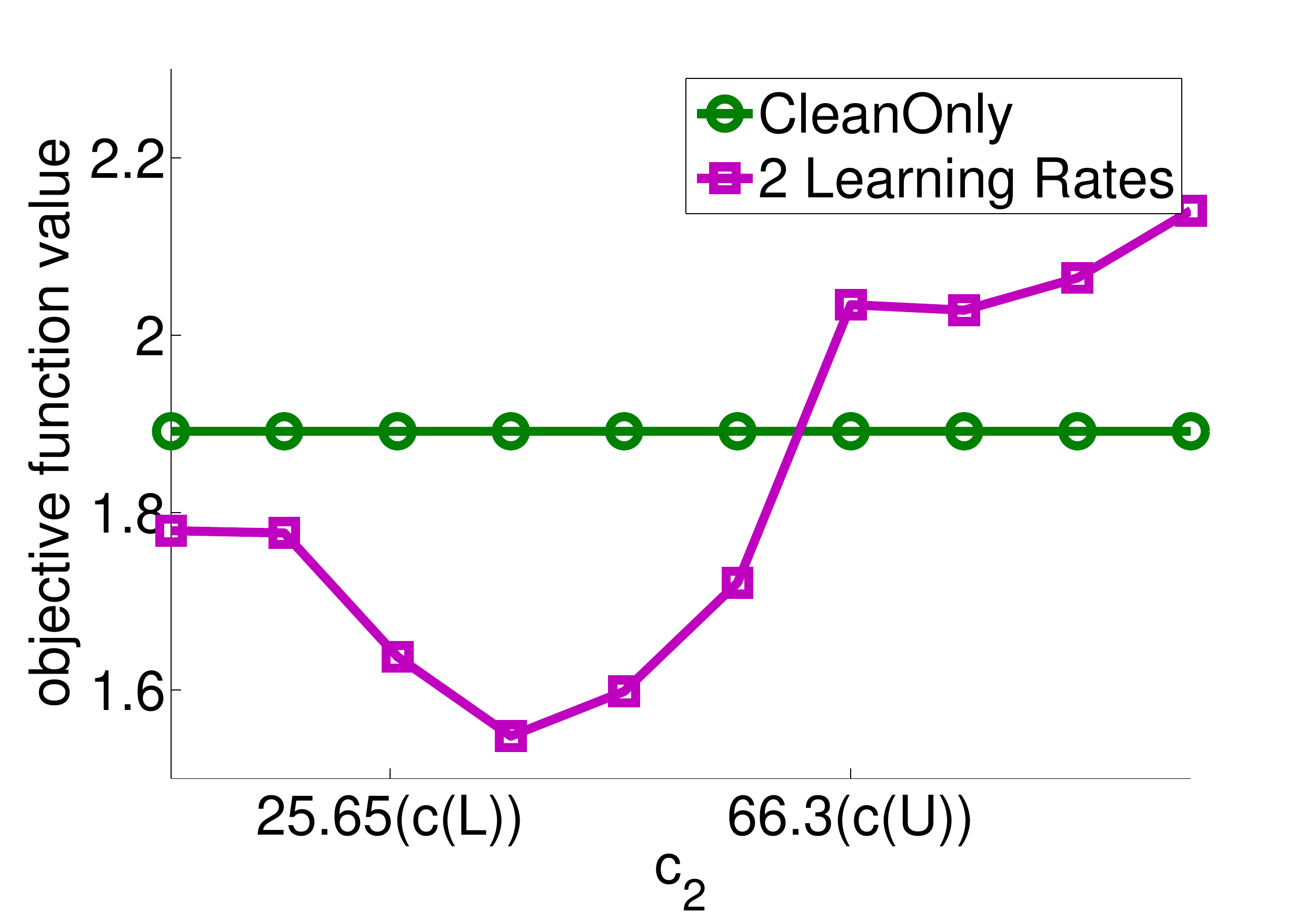}
\label{fig:varycb}}
\caption{Column 1 plots $|f(w_{T+1}) - f(v_{T+1})|$ vs. constant $c$ for $\reg = 0.001$. Column 2 plots final objective function value vs. $\epsilon_N$ for $\epsilon_C = 10$. Column 3 plots final objective function value vs. $c_2$ for $\epsilon_N = 2$ (top) and $\epsilon_N = 1$ (bottom). Top row shows figures for \MNIST\ and bottom row for \Covertype. }
\end{figure*}

\paragraph{Does Data Order Change Performance?} 
Our first task is to investigate the effect of data order on performance. For this purpose, we compare three methods -- $\mathsf{CleanFirst}$, where all of $\DC$ is used before $\DN$, $\mathsf{NoisyFirst}$, where all of $\DN$ is used before $\DC$, and $\mathsf{Arbitrary}$, where data from $\DN \cup \DC$ is presented to the algorithm in a random order.

The results are in Figures~\ref{fig:ordermnist} and~\ref{fig:ordercovtype}. We use $\epsilon_C=10, \epsilon_N = 3$. For each algorithm, we plot $|f(w_{T+1}) - f(v_{T+1})|$ as a function of the constant $c$ in the learning rate. Here $f(w_{T+1})$ is the function value obtained after $T$ rounds of SGD, and $f(v_{T+1})$ is the function value obtained after $T$ rounds of SGD if we iterate over the data in the same order, but add no extra noise to the gradient. (See Theorem~\ref{thm:order} for more details.) As predicted by Theorem \ref{thm:order}, the results show that for $c < \frac{1}{\lambda}$, $\mathsf{CleanFirst}$ has the best performance, while for $c > \frac{1}{\lambda}$, $\mathsf{NoisyFirst}$ performs best. $\mathsf{Arbitrary}$ performs close to $\mathsf{NoisyFirst}$ for a range of values of $c$, which we expect as only $10\%$ of the data belongs to $\DC$.

\paragraph{Are Two Learning Rates Better than One?} We next investigate whether using two learning rates in SGD can improve performance. We compare five approaches. $\mathsf{Optimal}$ is the gold standard where we access the raw data without any intervening noisy oracle. $\mathsf{CleanOnly}$ uses only $\DC$ with learning rate with the optimal value of $c$ obtained from Section~\ref{sec:t}. $\mathsf{SameClean}$ and $\mathsf{SameNoisy}$ use a single value of the constant $c$ in the learning rate for $\DN \cup \DC$, where $c$ is obtained by optimizing \eqref{eq:orderopt_t}\footnote{Note that we plug in separate noise rates for $\GC$ and $\GN$ in the learning rate calculations.} under the constraint that $\cs{1}=\cs{2}$. $\mathsf{SameClean}$ uses all of $\DC$ before using $\DN$, while $\mathsf{SameNoisy}$ uses all of $\DN$ before using $\DC$. In $\mathsf{Algorithm2}$, we use Algorithm~\ref{alg:tlearningrate} to set the two learning rates and the data order ($\DC$ first or $\DN$ first). In each case, we set $\epsC = 10$, vary $\epsN$ from $1$ to $10$, and plot the function value obtained at the end of the optimization. 

The results are plotted in Figures~\ref{fig:tworatesa} and~\ref{fig:tworatesb}. Each plotted point is an average of $100$ runs. It is clear that $\mathsf{Algorithm2}$, which uses two learning rates, performs better than both $\mathsf{SameNoisy}$ and $\mathsf{SameClean}$. As expected, the performance difference diminishes as $\epsN$ increases (that is, the noisy data gets cleaner). For moderate and high $\epsN$, $\mathsf{Algorithm2}$ performs best, while for low $\epsN$ (very noisy $\DN$), $\mathsf{CleanOnly}$ has slightly better performance. We therefore conclude that using two learning rates is better than using a single learning rate with both datasets, and that $\mathsf{Algorithm2}$ performs best for moderate to low noise levels.

\paragraph{Does Noisy Data Always Help?}  A natural question to ask is whether using noisy data always helps performance, or if there is some threshold noise level beyond which we should not use noisy data. Lemma~\ref{lem:largenoi2} shows that in theory, we obtain a better upper bound on performance when we use noisy data; in contrast, Figures~\ref{fig:tworatesa} and~\ref{fig:tworatesb} show that for low $\epsN$ (high noise), $\mathsf{Algorithm2}$ performs worse than $\mathsf{CleanOnly}$. How do we explain this apparent contradiction?

To understand this effect, in Figures~\ref{fig:varyca} and~\ref{fig:varycb} we plot the performance of SGD using two learning rates (with $c_1 = \frac{1}{\lambda}$) against $\mathsf{CleanOnly}$ as a function of the second learning rate $c_2$. The figures show that the best performance is attained at a value of $c_2$ which is different from the value predicted by $\mathsf{Algorithm2}$, and {\em{this best performance is better than $\mathsf{CleanOnly}$}}. Thus, noisy data always improves performance; however, the improvement may not be achieved at the learning rate predicted by our algorithm. 

Why does our algorithm perform suboptimally? We believe this happens because the values of $\Gamma_N$ and $\Gamma_C$ used by our algorithm are fairly loose upper bounds. For local differential privacy, an easy lower bound on $\Gamma$ is $\sqrt{\frac{4(d^2 + d)}{\epsilon^2 b}}$, where $b$ is the mini-batch size; let $c_2(L)$ (resp. $c_2(U)$) be the value of $c_2$ obtained by plugging in these lower bounds (resp. upper bounds from Theorem~\ref{thm:Glocaldp}) to Algorithm 1. Our experiments show that the optimal value of $c_2$ always lies between $c_2(L)$ and $c_2(U)$, which indicates that the suboptimal performance may be due to the looseness in the bounds.

We thus find that even in these high noise cases, theoretical analysis often allows us to identify {\em{an interval}} containing the optimal value of $c_2$. In practice, we recommend running Algorithm 2 twice -- once with upper, and once with lower bounds to obtain an interval containing $c_2$, and then performing a line search to find the optimal $c_2$.%

\section{Conclusion \label{sec:conclusion}}
In this paper we propose a model for learning from heterogeneous noise that is appropriate for studying stochastic gradient approaches to learning. In our model, data from different sites are accessed through different oracles which provide noisy versions of the gradient. Learning under local differential privacy and random classification noise are both instances of our model. We show that for two sites with different noise levels, processing data from one site followed by the other is better than randomly sampling the data, and the optimal data order depends on the learning rate. We then provide a method for choosing learning rates that depends on the noise levels and showed that these choices achieve lower regret than using a common learning rate. We validate these findings through experiments on two standard data sets and show that our method for choosing learning rates often yields improvements when the noise levels are moderate.  In the case where one data set is much noisier than the other, we provide a different heuristic to choose a learning rate that improves the regret.

There are several different directions towards generalizing the work here.  Firstly, extending the results to multiple sites and multiple noise levels will give more insights as to how to leverage large numbers of data sources.  This leads naturally to cost and budgeting questions: how much should we pay for additional noisy data?  Our results for data order do not depend on the actual noise levels, but rather their relative level.  However, we use the noise levels to tune the learning rates for different sites. If bounds on the noise levels are available, we can still apply our heuristic. Adaptive approaches for estimating the noise levels while learning are also an interesting approach for future study.

\paragraph{Acknowledgements.} The work of K. Chaudhuri and S. Song was sponsored by NIH under U54 HL108460 and the NSF under IIS 1253942. %

\appendix

\section{Appendix}
\newcommand{\calGdp}{\calG_\text{DP}}

\subsection{Mathematical miscellany}

In many cases we would like to bound a summation using an integral.
\begin{lemma} \label{lem:sum_int}
For $x \geq 0$, we have
\begin{equation} \label{eqn:sum_int_inc_leq}
\sum_{i=a}^b i^x \leq \int_a^{b+1} i^x di = \dfrac{(b+1)^{x+1} - a^{x+1}}{x+1}
\end{equation}
\begin{equation} \label{eqn:sum_int_inc_geq}
\sum_{i=a}^b i^x \geq \int_{a-1}^{b} i^x di = \dfrac{b^{x+1} - (a-1)^{x+1}}{x+1}
\end{equation}

For $x < 0$ and $x \neq -1$, we have
\begin{equation} \label{eqn:sum_int_dec_leq}
\sum_{i=a}^b i^x \leq \int_{a-1}^{b} i^x di = \dfrac{b^{x+1} - (a-1)^{x+1}}{x+1}
\end{equation}
\begin{equation} \label{eqn:sum_int_dec_geq}
\sum_{i=a}^b i^x \geq \int_{a}^{b+1} i^x di = \dfrac{(b+1)^{x+1} - a^{x+1}}{x+1}
\end{equation}

For $x = -1$, we have
\begin{equation} \label{eqn:sum_int_dec_leq_log}
\sum_{i=a}^b i^x \leq \int_{a-1}^{b} i^x di = \log \frac{b}{a - 1} 
\end{equation}
\begin{equation} \label{eqn:sum_int_dec_geq_log}
\sum_{i=a}^b i^x \geq \int_{a}^{b+1} i^x di = \log \frac{b+1}{a}
\end{equation}

\end{lemma}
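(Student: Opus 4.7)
The plan is to prove all six inequalities by the same elementary monotonicity argument: on each unit interval $[j,j+1]$ (or $[j-1,j]$), compare the constant value $j^x$ to the integral of $i^x$ over that interval, using whether $f(i) = i^x$ is increasing or decreasing. Summing these per-interval comparisons telescopes into the stated global bounds. The case split in the lemma is dictated by the sign of the exponent: $x \geq 0$ gives an increasing integrand, $x < 0$ gives a decreasing integrand, and $x = -1$ is separated only because the antiderivative changes form from $\tfrac{i^{x+1}}{x+1}$ to $\log i$.

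First I would dispatch the increasing case $x \geq 0$. Since $f(i) = i^x$ is nondecreasing on $[0,\infty)$, for any integer $j \geq a$ and $i \in [j, j+1]$ we have $j^x \leq i^x$, hence $j^x \leq \int_j^{j+1} i^x \, di$; summing $j$ from $a$ to $b$ yields \eqref{eqn:sum_int_inc_leq}. Symmetrically, for $i \in [j-1, j]$ we have $i^x \leq j^x$, hence $\int_{j-1}^j i^x \, di \leq j^x$; summing gives \eqref{eqn:sum_int_inc_geq}. Evaluating the antiderivative $\tfrac{i^{x+1}}{x+1}$ at the endpoints produces the closed forms displayed in the lemma.

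Next I would handle $x < 0$ with $x \neq -1$. Here $f(i) = i^x$ is strictly decreasing on $(0,\infty)$, so the roles of the two interval comparisons flip: for $i \in [j-1,j]$ we get $j^x \leq i^x$ and hence $j^x \leq \int_{j-1}^j i^x \, di$, which sums to \eqref{eqn:sum_int_dec_leq}; for $i \in [j,j+1]$ we get $i^x \leq j^x$, so $\int_j^{j+1} i^x \, di \leq j^x$, summing to \eqref{eqn:sum_int_dec_geq}. The antiderivative $\tfrac{i^{x+1}}{x+1}$ is still valid because $x+1 \neq 0$, giving the displayed closed forms. The $x = -1$ case uses the very same monotonicity inequalities on $[j-1,j]$ and $[j,j+1]$ for the decreasing function $1/i$, but the antiderivative is now $\log i$; evaluating at the endpoints produces $\log(b/(a-1))$ and $\log((b+1)/a)$, yielding \eqref{eqn:sum_int_dec_leq_log} and \eqref{eqn:sum_int_dec_geq_log}.

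There is no real obstacle here; the argument is a standard Riemann-sum comparison. The only mild bookkeeping point is being careful that the lower endpoint $a-1$ that appears in \eqref{eqn:sum_int_inc_geq}, \eqref{eqn:sum_int_dec_leq}, and \eqref{eqn:sum_int_dec_leq_log} is nonnegative (respectively positive in the log case), which is implicitly required for $i^x$ to be defined on $[a-1,b]$; this will be satisfied in all applications of the lemma later in the paper where $a$ is a positive integer at least $2$. I would state the assumption $a \geq 1$ (and $a \geq 2$ for the decreasing cases) at the start of the proof for cleanliness, then present the two monotonicity arguments as a single unified calculation and finish by integrating the power function in the three regimes $x > -1$, $x < -1$, $x = -1$.
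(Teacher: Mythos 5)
Your proposal is correct and matches the paper's approach: the paper simply remarks that $\{i^x\}$ is increasing for $x>0$ and decreasing for $x<0$ and that the result follows from the standard technique of bounding a sum by an integral, which is precisely the per-interval monotonicity comparison you carry out in detail. Your additional care about the endpoint $a-1$ being positive (so the integrand and the logarithm are defined) is a reasonable bookkeeping point that the paper leaves implicit.
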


The sequence $\{i^x\}$ is increasing when $x>0$ and is decreasing when $x<0$.  The proof follows directly from applying standard technique of bounding summation with integral. 

\subsection{Details from Section~\ref{sec:model} \label{sec:modelproofs}}
\begin{proof} (Of Theorem~\ref{thm:Glocaldp})
Consider an oracle $\calG$ implemented based on a dataset $D$ of size $T$. Given any sequence $w_1, w_2, \ldots, w_T$, the {\em{disguised version}} of $D$ output by $\calG$ is the sequence of gradients $\calG(w_1), \ldots, \calG(w_T)$. Suppose that the oracle accesses the data in a (random) order specified by a permutation $\pi$; for any $t$, any $x, x' \in \calX$, $y, y' \in \{1,-1\}$, we have 
\begin{align*}
\dfrac{\rho(\calG(w_t) = g | (x_{\pi(t)}, y_{\pi(t)}) = (x,y))}{\rho(\calG(w_t) = g | (x_{\pi(t)}, y_{\pi(t)}) = (x',y'))}
=&		\dfrac{\rho(Z_t = g - \reg w - \nabla \ell(w, x, y))}{\rho(Z_t = g - \reg w - \nabla \ell(w, x', y'))} \\
=&		\dfrac{e^{-(\epsilon/2) \|g - \reg w - \nabla \ell(w, x, y)\|} }{e^{-(\epsilon/2) \|g - \reg w - \nabla \ell(w, x', y')\|}} \\
\leq&	\exp\left( (\epsilon/2) (\|\nabla \ell(w, x, y)\|+\|\nabla \ell(w, x', y')\|) \right) \\
\leq&	\exp\left( \epsilon \right).
\end{align*}
The first inequality follows from the triangle inequality, and the last step follows from the fact that $\| \nabla \ell(w, x, y) \| \leq 1$. The privacy proof follows.

For the rest of the theorem, we consider a slightly generalized version of SGD that includes mini-batch updates. Suppose the batch size is $b$; for standard SGD, $b = 1$. For a given $t$, we call $\calG(w_t)$ $b$ successive times to obtain noisy gradient estimates $g_1(w_t), \ldots, g_b(w_t)$; these are gradient estimates at $w_t$ but are based on separate (private) samples. The SGD update rule is:
\[ w_{t+1} = \Proj{w_t - \frac{\step_t}{b} (g_1(w_t) + \ldots + g_b(w_t))}. \]

For any $i$, $\bbE[g_i(w_t)] = \lambda w + \bbE[ \nabla \ell(w, x, y)]$, where the first expectation is with respect to the data distribution and the noise, and the second is with respect to the data distribution; the unbiasedness result follows.  

We now bound the norm of the noisy gradient calculated from a batch. Suppose that the oracle accesses the dataset $D$ in an order $\pi$. Then, $g_i(w_t) = \reg w + \nabla \ell(w_t, x_{\pi( (t-1)b + i)}, y_{\pi( (t - 1)b + i)}) + Z_{(t - 1)b + i}$. Expanding on the expression for the expected squared norm of the gradient, we have
\begin{align}
\E{\norm{\frac{1}{b}(g_1(w_t) + \ldots + g_b(w_t))}^2} 
=&	\E{\norm{\reg w + \dfrac{1}{b}\sum_{i=1}^{b} \nabla \ell(w_t, x_{\pi( (t - 1)b + i)}, y_{\pi( (t-1)b + i)})}^2} \notag \\ 
& + \dfrac{2}{b}\E{\left(\reg w + \dfrac{1}{b}\sum_{i=1}^{b} \nabla \ell(w_t, x_{\pi( (t-1)b + i)}, y_{ \pi( (t-1)b + i) }) \right) \cdot \left(\sum_{i=1}^{b} Z_{ (t - 1)b + i} \right)} \notag \\
&   + \dfrac{1}{b^2}\E{\norm{\sum_{i=1}^{b} Z_{ (t-1)b + i }}^2} 
	\label{eq:oraclenorm}
\end{align}

We now look at the three terms in \eqref{eq:oraclenorm} separately.  
\\
The first term can be further expanded to:
\begin{align}
\E{ \norm{ \reg w }^2} & + \E{ \norm{ \frac{1}{b^2} \sum_{i=1}^{b} \nabla \ell(w_t, x_{\pi( (t - 1)b + i)}, y_{\pi( (t-1)b + i)})}^2} \notag \\
& + 2 \reg w \cdot \left( \sum_{i=1}^{b} \E{\nabla \ell(w_t, x_{\pi( (t - 1)b + i)}, y_{\pi( (t-1)b + i)})} \right) \label{eqn:normbound1}
\end{align}
The first term in \eqref{eqn:normbound1} is at most $\reg^2 \max_{w \in \calW} \| w\|^2$, which is at most $1$. The second term is at most $\max_{w} \reg \| w\| \cdot \max_{w, x, y} \| \nabla \ell(w, x, y)\| \leq 1$, and the third term is at most $2$. Thus, the first term in \eqref{eq:oraclenorm} is at most $4$. Notice that this upper bound can be pretty loose compare to the average $\norm{\reg w + \dfrac{1}{b}\sum_{i=1}^{b} \nabla \ell(w_t, x_{\pi( (t - 1)b + i)}, y_{\pi( (t-1)b + i)})}^2$ values seen in experiment. This leads to a loose estimation of the noise level for oracle $\calG^{\text{DP}}$.
\\
To bound the second term in \eqref{eq:oraclenorm}, observe that for all $i$, $Z_{(t-1)b + i}$ is independent of any $Z_{(t-1)b + i'}$ when $i \neq i'$, as well as of the dataset. Combining this with the fact that $\E{Z_{\tau}} = 0$ for any $\tau$, we get that this term is $0$.
\\
To bound the third term in \eqref{eq:oraclenorm}, we have:
\begin{align*}
\dfrac{1}{b^2}\E{\norm{\sum_{t \in B} Z_t }^2_2} &=	\dfrac{1}{b^2}\E{\sum_{t \in B} \norm{Z_t }^2_2 + \sum_{t \in B, s\in B, t\neq s} Z_t \cdot Z_s}\\
&=	\dfrac{1}{b^2}\sum_{t \in B} \E{\norm{Z_t }^2_2} + \dfrac{1}{b^2} \sum_{t \in B, s\in B, t\neq s} \E{Z_t} \cdot \E{Z_s}\\
&=	\dfrac{1}{b^2}\sum_{t \in B} \E{\norm{Z_t }^2_2},
\end{align*}
where the first equality is from the linearity of expectation and the last two equalities is from the fact that $Z_i$ is independently drawn zeros mean vector. Because $Z_t$ follows $\rho(Z_t = z) \propto e^{-(\epsilon/2) \|z\|}$, we have 
	\[
	\rho(\|Z_t\| = x) \propto x^{d-1} e^{-(\epsilon/2) x},
	\]
which is a Gamma distribution. For $X \sim \text{Gamma}(k, \theta)$, $\E{X} = k\theta$ and $\Var{X} = k\theta^2$. Also, by property of expectation, $\E{X^2} = (\E{X})^2+\Var{X}$. We then have $\E{\norm{Z_t }^2_2} = \dfrac{4(d^2+d)}{\epsilon^2}$ and the whole term equals to $\dfrac{4(d^2+d)}{\epsilon^2 b}$.\\
Combining the three bounds together, we have a final bound of  $4+\dfrac{4(d^2+d)}{\epsilon^2 b}$. The lemma follows.

\end{proof}

\subsection{Proofs from Section~\ref{sec:t}}
Recall that we have oracles $\calG_1, \calG_2$ based on data sets $D_1$ and $D_2$. The fractions of data in each data set are $\beta_1 = \frac{|D_1|}{|D_1|+|D_2|}$ and  $\beta_2 = \frac{|D_2|}{|D_1|+|D_2|}$, respectively.

\subsubsection{Proof of Theorem \ref{lem:tub}}
Theorem \ref{lem:tub} is a corollary of the following Lemma.
\begin{lemma} \label{lem:full_tub}
Consider the SGD algorithm that follows Algorithm \ref{alg:heterosgd}. Suppose the objective function is $\reg$-strongly convex, and define $\calW = \{w: \|w\| \leq B\}$. If $2\reg \cs{1} > 1$ and $i_0 = \lceil 2\cs{1}\reg \rceil$, then we have the following two cases:
\begin{enumerate}
\item If $2\reg \cs{2} \neq 1$, 
\begin{align*}
\E{\|w_{t+1} - w^*\|^2} &\leq \left(4\Noi{1}^2 \dfrac{\beta_1^{2\reg \cs{2}-1} \cs{1}^2}{2\reg \cs{1} - 1}
	+ 4\Noi{2}^2 \dfrac{\cs{2}^2 (1 - \beta_1^{2\reg \cs{2}-1})}{2\reg \cs{2}-1} \right) \cdot \dfrac{1}{T} + \calO\left(\frac{1}{T^{\min(2 \reg \cs{1}, 2)}}\right) 
\end{align*}
\item If $2\reg \cs{2} = 1$, 
\begin{align*}
\E{\|w_{t+1} - w^*\|^2} 
&\le \left( 4\Noi{1}^2 \dfrac{\beta_1^{2\reg \cs{2}-1} \cs{1}^2}{2\reg \cs{1} - 1}
		+ 4\Noi{2}^2 \cs{2}^2 \log \dfrac{1}{\beta_1} \right) \cdot \dfrac{1}{T} + \calO\left(\frac{1}{T^{\min(2 \reg \cs{1}, 2)}}\right)
\end{align*}
\end{enumerate}
\end{lemma}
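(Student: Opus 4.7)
\subsubsection*{Proof plan for Lemma \ref{lem:full_tub}}

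The starting point is the standard one-step SGD contraction for a $\reg$-strongly convex objective with unbiased noisy gradients of second moment at most $\Gamma_t^2$. Using the non-expansiveness of the projection $\Pi_\calW$ and strong convexity of $f$, the plan is to first derive
\begin{align*}
\E{\|w_{t+1}-w^*\|^2} \le (1-2\reg\step_t)\E{\|w_t - w^*\|^2} + \step_t^2 \Gamma_t^2,
\end{align*}
valid whenever $2\reg\step_t\le 1$. Since $\step_t = c_1/t$ for $t\le \beta_1 T$ and $\step_t = c_2/t$ for $t>\beta_1 T$, and $2\reg c_1>1$, the hypothesis $2\reg\step_t\le 1$ holds once $t\ge i_0 := \lceil 2\reg c_1\rceil$. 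The iterates up to $i_0$ contribute only an $\calO(1)$ initial squared distance (bounded since $\calW$ is bounded), which will end up suppressed by the product that multiplies it.

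The next step is to unroll the recursion in two phases. Iterating from $i_0$ to $\beta_1 T$ with $(\step_i,\Gamma_i)=(c_1/i,\Gamma_1)$ and then from $\beta_1 T+1$ to $T$ with $(c_2/i,\Gamma_2)$, I would obtain
\begin{align*}
\E{\|w_{T+1}-w^*\|^2} \le{}& P_0 \cdot \E{\|w_{i_0}-w^*\|^2} + \Gamma_1^2 \, S_1 + \Gamma_2^2 \, S_2,
\end{align*}
where
\begin{align*}
P_0 &= \prod_{i=i_0}^{\beta_1 T}\!\left(1-\frac{2\reg c_1}{i}\right)\!\prod_{i=\beta_1 T+1}^{T}\!\left(1-\frac{2\reg c_2}{i}\right),\\
S_1 &= \prod_{i=\beta_1 T+1}^{T}\!\left(1-\frac{2\reg c_2}{i}\right)\sum_{i=i_0}^{\beta_1 T}\frac{c_1^2}{i^2}\prod_{j=i+1}^{\beta_1 T}\!\left(1-\frac{2\reg c_1}{j}\right),\\
S_2 &= \sum_{i=\beta_1 T+1}^{T}\frac{c_2^2}{i^2}\prod_{j=i+1}^{T}\!\left(1-\frac{2\reg c_2}{j}\right).
\end{align*}

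The main technical step is to convert these products and sums into closed-form bounds. Using $1-x\le e^{-x}$ and the integral bounds of Lemma \ref{lem:sum_int} for $\sum 1/j$, each product of the form $\prod_{j=a+1}^{b}(1-\alpha/j)$ is bounded by $(a/b)^\alpha$ (up to lower-order multiplicative constants). This gives $P_0 = \calO(T^{-2\reg c_2}\cdot \beta_1^{2\reg c_1 - 2\reg c_2})$, which when multiplied by the $\calO(1)$ initial distance is absorbed into the $\calO(1/T^{\min(2\reg c_1,2)})$ remainder after comparing with the leading $1/T$ terms. For $S_1$, the outer product contributes $(\beta_1 T/T)^{2\reg c_2}=\beta_1^{2\reg c_2}$ and, after substituting $\prod_{j=i+1}^{\beta_1 T}(1-2\reg c_1/j)\le (i/\beta_1 T)^{2\reg c_1}$, the remaining sum becomes $\sum_{i=i_0}^{\beta_1 T} i^{2\reg c_1 - 2}$, which by \eqref{eqn:sum_int_inc_leq} is bounded by $(\beta_1 T)^{2\reg c_1-1}/(2\reg c_1-1)$ since $2\reg c_1>1$. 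Collecting factors yields the $4\Gamma_1^2 \beta_1^{2\reg c_2 - 1} c_1^2/((2\reg c_1-1)T)$ term, with the factor $4$ coming from accumulated slack in the exponential and integral inequalities.

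The split into two cases enters in bounding $S_2$. After the same exponential-plus-integral trick, one gets a sum proportional to $\sum_{i=\beta_1 T+1}^{T} i^{2\reg c_2 - 2}$. When $2\reg c_2 \ne 1$, the integral $\int i^{2\reg c_2 - 2}\,di$ evaluates to $i^{2\reg c_2-1}/(2\reg c_2-1)$, producing the $(1-\beta_1^{2\reg c_2-1})\,c_2^2/(2\reg c_2-1)$ factor. When $2\reg c_2=1$, the same integral becomes $\log(T/\beta_1 T)=\log(1/\beta_1)$, which is exactly the second clause. The hard part will be carefully controlling the lower-order terms in all three quantities so that every slack factor (extra constants from $1-x\le e^{-x}$ and from moving between sums and integrals) is uniformly absorbed into the $\calO(1/T^{\min(2\reg c_1,2)})$ remainder; I would verify this by tracking each inequality with explicit constants and separating terms of order $1/T$ from those of order $1/T^{2\reg c_2}$ and $1/T^{2\reg c_1}$. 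Theorem \ref{lem:tub} then follows as the first case of the lemma.
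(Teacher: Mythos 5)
Your plan follows the paper's proof essentially step for step: the same one-step strong-convexity recursion, the same two-phase unrolling into $P_0$, $S_1$, $S_2$, the same $1-x\le e^{-x}$ plus integral-comparison bounds with the slack collected into the factor $4$, and the same case split on $2\reg\cs{2}=1$ for the last sum. The only slip is your stated order for $P_0$: the first product contributes $T^{-2\reg\cs{1}}$ (not $T^{-2\reg\cs{2}}$) and the $\beta_1$-exponent is $2\reg(\cs{2}-\cs{1})$, which is what actually lets that term be absorbed into the $\calO\left(1/T^{\min(2\reg\cs{1},2)}\right)$ remainder; as written, $T^{-2\reg\cs{2}}$ need not be $\calO(1/T)$ when $2\reg\cs{2}<1$.
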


We first begin with a lemma which follows from arguments very similar to those made in~\cite{RakhShamir:12arxiv}. 

\begin{lemma}
Let $w^*$ be the optimal solution to $\bbE[f(w)]$. Then,
\begin{align*}
\Esub{1}{t}{\|w_{t+1}-w^*\|^2} \le (1-2\reg \step_t)\Esub{1}{t}{\|w_{t}-w^*\|^2} + \step_t^2 \noi{t}^2.
\end{align*}
where the expectation is taken wrt the oracle as well as sampling from the data distribution.
\label{lem:rakhlinextended}
\end{lemma}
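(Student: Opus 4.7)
The plan is to follow the standard one-step SGD analysis for strongly convex objectives, adapted to allow the noise level of the oracle to vary with $t$. Let $\hat{g}(w_t)$ denote the noisy gradient returned by whichever oracle ($\calG_1$ or $\calG_2$) is queried at step $t$, and let $g(w) = \nabla f(w)$. By the assumed unbiasedness of the oracle (as spelled out in \eqref{eqn:noisemodel}), $\bbE[\hat{g}(w_t) \mid w_t] = g(w_t)$, and by the definition of the noise level, $\bbE[\|\hat{g}(w_t)\|^2 \mid w_t] \leq \noi{t}^2$.

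First I would use the non-expansiveness of the projection $\Pi_{\calW}$ onto the closed convex set $\calW$ (which contains $w^*$) to drop the projection from the SGD update and write
\begin{align*}
\|w_{t+1} - w^*\|^2 \leq \|w_t - \step_t \hat{g}(w_t) - w^*\|^2 = \|w_t - w^*\|^2 - 2\step_t \langle \hat{g}(w_t), w_t - w^*\rangle + \step_t^2 \|\hat{g}(w_t)\|^2.
\end{align*}
Taking conditional expectation given the history up to time $t$, the cross term becomes $-2 \step_t \langle g(w_t), w_t - w^*\rangle$, and the last term is bounded by $\step_t^2 \noi{t}^2$.

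Next I would invoke $\reg$-strong convexity of $f$ together with the optimality condition $\langle g(w^*), w_t - w^*\rangle \geq 0$ (or $g(w^*) = 0$ if $w^*$ is interior) to get the one-point monotonicity bound
\begin{align*}
\langle g(w_t), w_t - w^*\rangle \geq \reg \|w_t - w^*\|^2,
\end{align*}
which follows by summing the two strong-convexity inequalities evaluated at $w_t$ and $w^*$. Substituting this into the previous display yields
\begin{align*}
\Esub{1}{t}{\|w_{t+1} - w^*\|^2} \leq (1 - 2\reg\step_t)\Esub{1}{t}{\|w_t - w^*\|^2} + \step_t^2 \noi{t}^2,
\end{align*}
which is exactly the claimed recursion. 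Then I would take a further outer expectation over all randomness up to time $t$ to conclude.

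The proof is essentially routine once the ingredients are lined up; the only mild care needed is in the strong-convexity step, since strong convexity of $f$ on $\calW$ (rather than on all of $\bbR^d$) is what we have, and one must check that the one-point bound still applies when $w^*$ may lie on the boundary of $\calW$ (the variational inequality $\langle g(w^*), w - w^*\rangle \geq 0$ for all $w \in \calW$ handles this). This, together with matching the notation $\noi{t} \in \{\Noi{1}, \Noi{2}\}$ depending on which oracle is queried at step $t$, is the main (minor) obstacle; everything else is bookkeeping following the argument in~\cite{RakhShamir:12arxiv}.
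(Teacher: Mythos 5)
Your proposal is correct and follows essentially the same route as the paper's proof: non-expansiveness of the projection, expansion of the square, unbiasedness of the oracle for the cross term, the second-moment bound $\noi{t}^2$ for the quadratic term, and the one-point strong-monotonicity bound $\langle g(w_t), w_t - w^*\rangle \geq \reg\|w_t - w^*\|^2$ obtained from the two strong-convexity inequalities at $w_t$ and $w^*$. The paper phrases that last step as $g(w_t)^{\top}(w_t - w^*) \geq f(w_t) - f(w^*) + \tfrac{\reg}{2}\|w_t - w^*\|^2$ combined with $f(w_t) - f(w^*) \geq \tfrac{\reg}{2}\|w_t - w^*\|^2$, which is the same content, and your explicit mention of the variational inequality at $w^*$ on the boundary of $\calW$ is a point the paper leaves implicit.
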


\begin{proof}(Of Lemma~\ref{lem:rakhlinextended})
By strong convexity of $f$, we have 
\begin{equation} \label{eqn:strongconvex}
f(w') \geq f(w) + g(w)^{\top}(w' - w) + \dfrac{\reg}{2} \|w - w'\|^2.
\end{equation}
Then by taking $w=w_t$, $w'=w^*$ we have
\begin{equation} \label{eqn:strongconv1}
g(w_t)^{\top}(w_t-w^*) \geq f(w_t) - f(w^*) + \dfrac{\reg}{2}\|w_t-w^*\|^2.
\end{equation}
And similarly by taking $w'=w_t$, $w=w^*$, we have
\begin{equation} \label{eqn:strongconv2}
f(w_t) - f(w^*) \geq \dfrac{\reg}{2}\|w_t-w^*\|^2.
\end{equation}
By the update rule and convexity of $\calW$, we have
\begin{align*}
	\Esub{1}{t}{\|w_{t+1}-w^*\|^2} &=
		\Esub{1}{t}{\|\Proj{w_{t}-\step_t \hat{g}(w_t)}-w^*\|^2} \notag \\
	&\le \Esub{1}{t}{\|w_{t}-\step_t \hat{g}(w_t)-w^*\|^2} \notag \\
	&= \Esub{1}{t}{\|w_t-w^*\|^2} - 2\step_t \Esub{1}{t}{\hat{g}(w_t)^{\top}(w_t-w^*)}%
	\step_t^2 \Esub{1}{t}{\|\hat{g}(w_t)\|^2}.
\end{align*}
Consider the term $\Esub{1}{t}{\hat{g}(w_t)^{\top}(w_t-w^*)}$, where the expectation is taken over the randomness from time $1$ to $t$. Since $w_t$ is a function of the samples used from time $1$ to $t-1$, it is independent of the sample used at $t$. So we have%
\begin{align*}
\Esub{1}{t}{\|w_{t+1}-w^*\|^2}
& \le \Esub{1}{t}{\hat{g}(w_t)^{\top}(w_t-w^*)} 
	\notag \\
&=\Esub{1}{t-1}{\esub{t} [\hat{g}(w_t)^{\top}(w_t-w^*) | w_t]} 
	\notag \\
&=\Esub{1}{t-1}{\esub{t} [\hat{g}(w_t)^{\top} | w_t](w_t-w^*)}
	\notag \\
&=\Esub{1}{t-1}{g(w_t)^{\top} (w_t-w^*)}.
\end{align*}
We have the following upper bound:
\begin{align*}
\Esub{1}{t}{\|w_{t+1}-w^*\|^2} 
	&\le \Esub{1}{t}{\|w_t-w^*\|^2} - 2\step_t \Esub{1}{t-1}{g(w_t)^{\top}(w_t-w^*)} 
	\notag \\
	&\hspace{15pt} + \step_t^2 \Esub{1}{t}{\|\hat{g}(w_t)\|^2}.
\end{align*}
By \eqref{eqn:strongconv1} and the bound $\E{\|\hat{g}(w_t)\|^2} \leq \noi{t}^2$, we have%
\begin{align*}
&\Esub{1}{t}{\|w_{t+1}-w^*\|^2} \le  \Esub{1}{t}{\|w_t-w^*\|^2} - 2\step_t \Esub{1}{t-1}{f(w_t)-f(w^*) +\dfrac{\reg}{2} \|w_t-w^*\|^2} 
	+ \step_t^2 \noi{t}^2.
\end{align*}
Then by \eqref{eqn:strongconv2} and the fact that $w_t$ is independent of the sample used in time $t$, we have the following recursion: %
\begin{align*}
\Esub{1}{t}{\|w_{t+1}-w^*\|^2} \le (1-2\reg \step_t)\Esub{1}{t}{\|w_{t}-w^*\|^2} + \step_t^2 \noi{t}^2.
\end{align*}
\end{proof}

\begin{proof} (Of Lemma~\ref{lem:full_tub})
Let $g(w)$ be the true gradient $\nabla f(w)$ and $\hat g(w)$ be the unbiased noisy gradient provided by the oracle $\calG_1$ or $\calG_2$, whichever is queried. From Lemma~\ref{lem:rakhlinextended}, we have the following recursion:
\begin{align*}
\Esub{1}{t}{\|w_{t+1}-w^*\|^2} \le (1-2\reg \step_t)\Esub{1}{t}{\|w_{t}-w^*\|^2} + \step_t^2 \noi{t}^2.
\end{align*}

Let $i_0$ be the smallest positive integer such that $2\reg \step_{i_0}<1$, i.e, $i_0 = \lceil 2\cs{1}\reg \rceil$. Notice that for fixed step size constant $c$ and $\reg$, $i_0$ would be a fixed constant. Therefore we assume that $i_0 < \beta T$. Using the above inequality inductively, and substituting $\noi{t} = \Noi{1}$ for $t \le \beta_1 T$ and $\noi{t} = \Noi{2}$ for $t > \beta_1 T$, we have
\begin{align*}
\Esub{1}{T}{\|w_{T+1}-w^*\|^2} &\leq	 \prod_{i=i_0}^{\beta_1 T} \left(1-2\reg \step_i \right) \prod_{i=\beta_1 T+1}^{T} \left(1-2\reg \step_i \right) \Esub{1}{T}{\|w_{i_0}-w^*\|^2} \\
&\hspace{15pt} +  \Noi{1}^2 \prod_{i=\beta_1 T+1}^{T} (1-2\reg \step_i) \sum_{i=i_0}^{\beta_1 T}\step_i^2 \prod_{j=i+1}^{\beta_1 T} (1-2\reg \step_j) \\
&\hspace{15pt} +  \Noi{2}^2 \sum_{i=\beta_1 T+1}^{T} \step_i^2 \prod_{j=i+1}^{T} (1-2\reg \step_j).
\end{align*}
By substituting $\step_t=\dfrac{\cs{1}}{t}$ for $D_1$ and $\step_t=\dfrac{\cs{2}}{t}$ for $D_2$, we have
\begin{align*}
\Esub{1}{T}{\|w_{T+1}-w^*\|^2} 
&\le \prod_{i=i_0}^{\beta_1 T} \left( 1-\dfrac{2\reg \cs{1}}{i} \right) \prod_{i=\beta_1 T+1}^{T} \left(1-\dfrac{2\reg \cs{2}}{i} \right) \Esub{1}{T}{\|w_{i_0}-w^*\|^2} \\
&\hspace{15pt} +  \Noi{1}^2 \prod_{i=\beta_1 T+1}^{T} \left(1-\dfrac{2\reg \cs{2}}{i} \right) \sum_{i=i_0}^{\beta_1 T}\dfrac{\cs{1}^2}{i^2} \prod_{j=i+1}^{\beta_1 T} \left(1-\dfrac{2\reg \cs{1}}{j} \right) \\
&\hspace{15pt} +  \Noi{2}^2 \sum_{i=\beta_1 T+1}^{T}\dfrac{\cs{2}^2}{i^2} \prod_{j=i+1}^{T} \left(1-\dfrac{2\reg \cs{2}}{j} \right).
\end{align*}
Applying the inequality $1 - x \leq e^{-x}$ to each of the terms in the products, and simplifying, we get:
\begin{align}
\Esub{1}{T}{\|w_{T+1}-w^*\|^2} 
&\le  e^{-2\reg \cs{1} \sum_{i=i_0}^{\beta_1 T} \frac{1}{i}} e^{-2\reg \cs{2} \sum_{i=\beta_1 T+1}^{\top} \frac{1}{i}} \Esub{1}{T}{\|w_{i_0}-w^*\|^2} \notag \\
	&\hspace{15pt} +  \Noi{1}^2 e^{-2\reg \cs{2} \sum_{i=\beta_1 T+1}^{\top} \frac{1}{i}} \sum_{i=i_0}^{\beta_1 T}\dfrac{\cs{1}^2}{i^2} e^{-2\reg \cs{1} \sum_{j=i+1}^{\beta_1 T} \frac{1}{j}} \notag \\
	&\hspace{15pt} +  \Noi{2}^2 \sum_{i=\beta_1 T+1}^{\top}\dfrac{\cs{2}^2}{i^2} e^{-2\reg \cs{2}  \sum_{j=i+1}^{\top} \frac{1}{j}}.
	\label{eq:tvar_ub1}
\end{align}	
We would like to bound \eqref{eq:tvar_ub1} term by term.\\
A bound we will use later is: %
\begin{equation} \label{eq:bd_e_longterm}
e^{2\reg \cs{2} /\beta_1 T} = 1+\dfrac{2\reg \cs{2}}{\beta_1 T} e^{2\reg \cs{2} /\beta_1 T'} \leq 1+\dfrac{2\reg \cs{2}}{\beta_1 T} e^{2\reg \cs{2} /\beta_1},
\end{equation}
where the equality is obtained using Taylor's theorem, and the inequality follows because $T'$ is in the range $[1, \infty)$.  Now we can bound the three terms in \eqref{eq:tvar_ub1} separately.

\paragraph{The first term in \eqref{eq:tvar_ub1}:} We bound this as follows: 
\begin{align*}
&			e^{-2\reg \cs{1} \sum_{i=i_0}^{\beta_1 T} \frac{1}{i}} e^{-2\reg \cs{2} \sum_{i=\beta_1 T+1}^{\top} \frac{1}{i}} \Esub{1}{T}{\|w_{i_0}-w^*\|^2} \\
&\leq	e^{-2\reg \cs{1} \log \frac{\beta_1 T}{i_0}} e^{-2\reg \cs{2} (\log \frac{1}{\beta_1}-\frac{1}{\beta_1 T})} \Esub{1}{T}{\|w_{i_0}-w^*\|^2} \\
&	\leq \left( \dfrac{i_0}{T} \right)^{2\reg \cs{1}} \beta_1^{2\reg (\cs{2}-\cs{1})} e^{2\reg \cs{2} /\beta_1 T} (4B^2) \notag \\
& \leq \left( \dfrac{i_0}{T} \right)^{2\reg \cs{1}} \beta_1^{2\reg (\cs{2}-\cs{1})} \left( 1+\dfrac{2\reg \cs{2}}{\beta_1 T} e^{2\reg \cs{2} /\beta_1} \right)  4B^2 \notag \\
&	= 4B^2  {i_0}^{2\reg \cs{1}} \beta_1^{2\reg (\cs{2}-\cs{1})} \dfrac{1}{T^{2\reg \cs{1}}} + \calO\left(\dfrac{1}{T^{2\reg \cs{1}+1}}\right),
\end{align*}
where the first equality follows from \eqref{eqn:sum_int_dec_geq}.%
The second inequality follows from $\|w\| \leq B$, $\|w-w'\| \leq \|w\|+\|w'\| \leq 2B$, and bounding expectation using maximum. The third  follows from \eqref{eq:bd_e_longterm}.

\paragraph{The second term in \eqref{eq:tvar_ub1}:} We bound this as follows:
\begin{align}
\Noi{1}^2 e^{-2\reg \cs{2} \sum_{i=\beta_1 T+1}^{\top} \frac{1}{i}} \sum_{i=i_0}^{\beta_1 T}\dfrac{\cs{1}^2}{i^2} e^{-2\reg \cs{1} \sum_{j=i+1}^{\beta_1 T} \frac{1}{j}} 
&\leq \Noi{1}^2 e^{-2\reg \cs{2} (\log \frac{1}{\beta_1}-\frac{1}{\beta_1 T})} \sum_{i=i_0}^{\beta_1 T}\dfrac{\cs{1}^2}{i^2} e^{-2\reg \cs{1} \log \frac{\beta_1 T}{i+1}} \notag\\
&=	\Noi{1}^2 \beta_1^{2\reg \cs{2}} e^{2\reg \cs{2}/\beta_1 T} \sum_{i=i_0}^{\beta_1 T}\dfrac{\cs{1}^2}{i^2} \left(\dfrac{i+1}{\beta_1 T} \right)^{2\reg \cs{1}} \notag\\
&=	\Noi{1}^2 \beta_1^{2\reg (\cs{2}-\cs{1})} e^{2\reg \cs{2}/\beta_1 T} \cs{1}^2 T^{-2\reg \cs{1}} \sum_{i=i_0}^{\beta_1 T}\dfrac{(i+1)^{2\reg \cs{1}}}{i^2}  \notag\\
\leq& \Noi{1}^2 \beta_1^{2\reg (\cs{2}-\cs{1})} e^{2\reg \cs{2} /\beta_1 T} \cs{1}^2 T^{-2\reg \cs{1}} \sum_{i=i_0}^{\beta_1 T} 4 (i+1)^{2\reg \cs{1}-2} \notag\\
\leq& 4\Noi{1}^2 \beta_1^{2\reg (\cs{2}-\cs{1})} \left(1+\dfrac{2\reg \cs{2}}{\beta_1 T} e^{2\reg \cs{2} /\beta_1}\right) \cs{1}^2 T^{-2\reg \cs{1}} \sum_{i=i_0+1}^{\beta_1 T+1} i^{2\reg \cs{1}-2},
\label{eq:app:2nd}
\end{align}
where the first inequality follows from \eqref{eqn:sum_int_dec_geq}, %
the second inequality follows from $(1+\frac{1}{i})^2 \leq (1+\frac{1}{1})^2=4$, and the last inequality follows from \eqref{eq:bd_e_longterm}.

Bounding summation using integral following \eqref{eqn:sum_int_dec_leq} and \eqref{eqn:sum_int_inc_leq} of Lemma \ref{lem:sum_int}, if $2\reg \cs{1} > 1$, the term on the right hand side would be in the order of $\calO(1/T)$; 
if $2\reg \cs{1}=1$, it would be $\calO(\log T/T)$; if $2\reg \cs{1}<1$, it would be
 $\calO(1/T^{2\reg \cs{1}})$. 
Therefore to minimize the bound in terms of order, we would choose $\cs{1}$ such that $2\reg \cs{1} > 1$. To get an upper bound of the summation in \eqref{eq:app:2nd}, using \eqref{eqn:sum_int_dec_leq} of Lemma \ref{lem:sum_int}, for $2\reg \cs{1}<2$,

\begin{align*}
\sum_{j=i_0+1}^{\beta_1 T+1} i^{2\reg \cs{1}-2}  = \sum_{j=i_0+1}^{\beta_1 T} i^{2\reg \cs{1}-2}  + (\beta_1 T+1)^{2\reg \cs{1}-2} \le \dfrac{(\beta_1 T)^{2\reg \cs{1}-1}}{2\reg \cs{1}-1}+ \calO(T^{2\reg \cs{1}-2}).
\end{align*}

\noindent For $2\reg \cs{1}>2$, using \eqref{eqn:sum_int_inc_leq} of Lemma \ref{lem:sum_int},
\begin{align*}
\sum_{j=i_0+1}^{\beta_1 T+1} i^{2\reg \cs{1}-2} = \sum_{j=i_0+1}^{\beta_1 T - 1} i^{2\reg \cs{1}-2} +  (\beta_1 T)^{2\reg \cs{1}-2} + (\beta_1 T+1)^{2\reg \cs{1}-2} \le 	\dfrac{(\beta_1 T)^{2\reg \cs{1}-1}}{2\reg \cs{1}-1} +  \calO(T^{2\reg \cs{1}-2}).
\end{align*}
Finally, for $2\reg \cs{1}=2$,
\begin{align*}
\sum_{j=i_0+1}^{\beta_1 T+1} i^{2\reg \cs{1}-2} 
= 	(\beta_1 T+1) - (i_0+1) +1 
=  \beta_1 T + \calO(1).
\end{align*}
Combining the three cases together, we have
\begin{align*}
\sum_{j=i_0+1}^{\beta_1 T+1} i^{2\reg \cs{1}-2} 
\le 	\dfrac{(\beta_1 T)^{2\reg \cs{1}-1}}{2\reg \cs{1}-1} +  \bigO{T^{2\reg \cs{1}-2}}.
\end{align*}
This allows us to further upper bound \eqref{eq:app:2nd}:
\begin{align*}
4\Noi{1}^2 \beta_1^{2\reg (\cs{2}-\cs{1})} \left(1+\dfrac{2\reg \cs{2}}{\beta_1 T} e^{2\reg \cs{2} /\beta_1}\right) \cs{1}^2  T^{-2\reg \cs{1}} \sum_{i=i_0+1}^{\beta_1 T+1} i^{2\reg \cs{1}-2} \\
&\hspace{-3.5in} \leq 
	4\Noi{1}^2 \beta_1^{2\reg (\cs{2}-\cs{1})} 
		\left(1+\dfrac{2\reg \cs{2}}{\beta_1 T} e^{2\reg \cs{2} /\beta_1}\right) 
		\cs{1}^2  T^{-2\reg \cs{1}}
		\left(\dfrac{(\beta_1 T)^{2\reg \cs{1}-1}}{2\reg \cs{1}-1} 
		+  \calO\left(T^{2\reg \cs{1}-2}\right)\right) \\
&\hspace{-3.5in}	= \dfrac{4\Noi{1}^2  \cs{1}^2 \beta_1^{2\reg \cs{2}-1}}{2\reg \cs{1}-1} \cdot \dfrac{1}{T}  + \calO\left(\dfrac{1}{T^2}\right) + \calO\left(\dfrac{1}{T^3}\right).
\end{align*}

\paragraph{The last term in \eqref{eq:tvar_ub1}:} We bound this as follows: 
\begin{align}
& \Noi{2}^2 \sum_{i=\beta_1 T+1}^{\top}\dfrac{\cs{2}^2}{i^2} e^{-2\reg \cs{2}  \sum_{j=i+1}^{\top} \frac{1}{j}}  \leq	\Noi{2}^2 \sum_{i=\beta_1 T+1}^{\top}\dfrac{\cs{2}^2}{i^2} e^{-2\reg \cs{2}  \log \frac{T}{i+1}} \notag\\
&=		\Noi{2}^2 \cs{2}^2 T^{-2\reg \cs{2}}\sum_{i=\beta_1 T+1}^{\top}\dfrac{(i+1)^{2\reg \cs{2}}}{i^2} \leq	4\Noi{2}^2 \cs{2}^2 T^{-2\reg \cs{2}}\sum_{i=\beta_1 T+1}^{\top}\dfrac{(i+1)^{2\reg \cs{2}}}{(i+1)^2}	\notag	\\
&=		4\Noi{2}^2 \cs{2}^2 T^{-2\reg \cs{2}}\sum_{i=\beta_1 T+2}^{\top+1} i^{2\reg \cs{2}-2},
\label{eq:app:3nd}
\end{align}
where the first inequality follows from \eqref{eqn:sum_int_dec_geq} %
and the last inequality from $(1+\frac{1}{i})^2\leq 4$.\\
If $2\reg \cs{2} \neq 1$ and $2\reg \cs{2} \leq 2$, using \eqref{eqn:sum_int_dec_leq} from Lemma \ref{lem:sum_int},
\begin{align*}
\sum_{j=\beta_1 T+2}^{T+1} i^{2\reg \cs{2}-2} \leq \dfrac{1-\beta_1^{2\reg \cs{2}-1}}{2\reg \cs{2}-1}T^{2\reg \cs{2}-1}.
\end{align*}
If $2\reg \cs{2} > 2$, using \eqref{eqn:sum_int_inc_leq} from Lemma \ref{lem:sum_int}, 
\begin{align*}
\sum_{j=\beta_1 T+2}^{T+1} i^{2\reg \cs{2}-2}
&= 		\sum_{j=\beta_1 T}^{T-1} i^{2\reg \cs{2}-2} +  T^{2\reg \cs{2}-2} + (T+1)^{2\reg \cs{2}-2}
	 - (\beta_1 T+1)^{2\reg \cs{2}-2} - (\beta_1 T)^{2\reg \cs{2}-2}\\
&= 		\dfrac{1-\beta_1^{2\reg \cs{2}-1}}{2\reg \cs{2}-1}T^{2\reg \cs{2}-1}  +  \bigO{T^{2\reg \cs{2}-2}}.
\end{align*}
If $2\reg \cs{2} = 2$,
\begin{align*}
\sum_{j=\beta_1 T+2}^{T+1} i^{2\reg \cs{2}-2}
= 		\sum_{j=\beta_1 T+2}^{T+1} 1
= 		(1-\beta_1)T.
\end{align*}
In all three cases we have
\begin{align*}
\sum_{j=\beta_1 T+2}^{T+1} i^{2\reg \cs{2}-2} \leq
\dfrac{1-\beta_1^{2\reg \cs{2}-1}}{2\reg \cs{2}-1}T^{2\reg \cs{2}-1}  +  \bigO{T^{2\reg \cs{2}-2}}.
\end{align*}
Then \eqref{eq:app:3nd} can be further upper bounded for $2\reg \cs{2} \neq 1$
\begin{align}
4\Noi{2}^2 \cs{2}^2 T^{-2\reg \cs{2}} \sum_{i=\beta_1 T+2}^{\top+1} i^{2\reg \cs{2}-2} 
\le 	4\Noi{2}^2 \dfrac{\cs{2}^2 (1 - \beta_1^{2\reg \cs{2}-1})}{2\reg \cs{2}-1} \cdot \dfrac{1}{T} + \calO\left(\dfrac{1}{T^2}\right). \label{eq:app:3nd:neq1}
\end{align}
If $2\reg \cs{2} = 1$, we have
\begin{align*}
\sum_{j=\beta_1 T+2}^{T+1} i^{2\reg \cs{2}-2} =		\sum_{j=\beta_1 T+1}^{T} i^{-1} - (\beta_1 T+1)^{-1} + (T+1)^{-1} \leq 		\log \dfrac{1}{\beta_1},
\end{align*}
and then
	\[
			4\Noi{2}^2 \cs{2}^2 T^{-2\reg \cs{2}}\sum_{i=\beta_1 T+2}^{\top+1} i^{2\reg \cs{2}-2}
	\le 4\Noi{2}^2 \cs{2}^2 \log \dfrac{1}{\beta_1} \cdot \dfrac{1}{T}.
	\]
which is basically taking the limit as $2\reg \cs{2} \rightarrow 1$ of the highest order term of \eqref{eq:app:3nd:neq1}.

Therefore the summation of the three terms is of order $\calO(\frac{1}{T})$ (from the second and third terms), and the constant in the front of the highest order term takes on one of two values:
\begin{enumerate}
\item If $2\reg \cs{2} \neq 1$, 
\begin{equation*}
4\Noi{1}^2 \dfrac{\cs{1}^2 \beta_1^{2\reg \cs{2}-1}}{2\reg \cs{1} - 1}+ 
4\Noi{2}^2 \dfrac{\cs{2}^2 (1 - \beta_1^{2\reg \cs{2}-1})}{2\reg \cs{2}-1}.
\end{equation*}
\item If $2\reg \cs{2} = 1$, 
\begin{equation*}
4\Noi{1}^2 \dfrac{\cs{1}^2 \beta_1^{2\reg \cs{2}-1} }{2\reg \cs{1} - 1}+ 
4\Noi{2}^2 \cs{2}^2 \log \dfrac{1}{\beta_1}.
\end{equation*}
\end{enumerate}
\end{proof}

\subsubsection{Proof of Lemma \ref{lem:largenoi1}}
\begin{proof} (Of Lemma~\ref{lem:largenoi1})
Omitting the constant terms and setting $k_1 = 2\reg\cs{1}, k_2=2\reg\cs{2}$, we can re-write \eqref{eq:orderopt_t} as $1/T$ times
\begin{align}\label{eq:orderopt_t_2}
Q(k_1, k_2) = & \Noi{1}^2 \frac{\beta_1^{k_2 - 1} k_1^2}{k_1 - 1}  + \Noi{2}^2 \frac{(1 - \beta_1^{k_2 - 1}) k_2^2}{k_2 - 1},
\end{align} 
with $k_1^* = 2\reg\cs{1}^*=2$.\\
Observe that in this case, $k_2^* \geq 2$. Let $x = k_2 - 1$; then $x \geq 1$. Plugging in $k_1^* = 2$, we can re-write~\eqref{eq:orderopt_t_2} as
\begin{equation}\label{eq:xq}
Q(x) = 4 \Noi{1}^2 \beta_1^{x} + \Noi{2}^2 (1 - \beta_1^x) \left(x + \frac{1}{x} + 2 \right).
\end{equation}
Taking the derivative, we see that
\begin{align}\label{eq:xdervq}
Q'(x) =& - 4 \Noi{1}^2 \beta_1^{x} \log(1/\beta_1) + \Noi{2}^2 (1 - \beta_1^x) \left(1 - \frac{1}{x^2} \right) 
	+ \Noi{2}^2 \left(x + \frac{1}{x} + 2\right) \beta_1^x \log(1/\beta_1).
\end{align}
Suppose 
	\[
	l = \frac{2\log(\Noi{1}/\Noi{2}) + \log\log(1/\beta_1)}{\log (1/\beta_1)}.
	\]
Observe that $\beta_1^{l} \log(1/\beta_1) = \frac{\Noi{2}^2}{\Noi{1}^2}$. Plugging $x = l$ in to~\eqref{eq:xdervq}, the first term is $-4 \Noi{2}^2$, the second term is at most $\Noi{2}^2$, and the third term is at most $\frac{\Noi{2}^4}{\Noi{1}^2} (l + \frac{1}{l} + 2)$. Observe that for any fixed $\beta_1$, for large enough $\Noi{1}/\Noi{2}$, $l \geq 1$. Thus, the right hand side of~\eqref{eq:xdervq} is at most: $- 4 \Noi{2}^2 + \Noi{2}^2 + \frac{\Noi{2}^4}{\Noi{1}^2}(l + 3)$. For fixed $\beta_1$, $l$ grows logarithmically in $\Noi{1}/\Noi{2}$, and hence, for large enough $\Noi{1}/\Noi{2}$, $\frac{\Noi{2}^2 (l + 3)}{\Noi{1}^2}$ will become arbitrarily small. Therefore, for large enough $\Noi{1}/\Noi{2}$, $Q'(l) < 0$.
\\
Suppose 
	\[
	u = \frac{2 \log(4 \Noi{1}/\Noi{2}) + \log \log(1/\beta_1)}{\log(1/\beta_1)}.
	\] 
Observe that $\beta_1^u \log(1/\beta_1) = \frac{\Noi{2}^2}{16\Noi{1}^2}$. Plugging in $x = u$ to~\eqref{eq:xdervq}, the first term reduces to $-\frac{1}{4} \Noi{2}^2$, the second term is $\Noi{2}^2 (1 - \beta_1^u)(1 - \frac{1}{u^2})$, and the third term is $\geq 0$. Observe that as $\Noi{1}/\Noi{2} \rightarrow \infty$ with $\beta_1$ fixed, $\beta_1^u \rightarrow 0$ and $1/u^2 \rightarrow 0$. Thus, for large enough $\Noi{1}/\Noi{2}$, $\Noi{2}^2 (1 - \beta_1^u)(1 - \frac{1}{u^2}) \rightarrow \Noi{2}^2$, and therefore $Q'(u) > 0$. Thus, $Q'(x) = 0$ somewhere between $l$ and $u$ and the first part of the lemma follows.
\\
Consider 
	\[
	x = \frac{2 \log(m \Noi{1}/\Noi{2}) + \log \log(1/\beta_1)}{\log(1/\beta_1)}
	\]
with $1\le m\le 4$. The first term of \eqref{eq:xq} is always positive. As for the second term, $x+\frac{1}{x}+2 \geq x$ for positive $x$ and $\beta_1^x = \frac{\Noi{2}^2}{m^2 \Noi{1}^2} \frac{1}{\log(1/\beta_1)}$ is small when $\Noi{1}/\Noi{2}$ is sufficiently large. Therefore for sufficiently large $\Noi{1}/\Noi{2}$, we have $\Noi{2}^2 (1 - \beta_1^x) (x + \frac{1}{x} + 2) \geq \frac{\Noi{2}^2}{2} x$, and thus $Q(x) \geq \frac{\Noi{2}^2}{2} x$, which gives the lower bound. And plugging in $x=l$ gives the upper bound.

\end{proof}

\subsubsection{Proof of Lemma \ref{lem:largenoi2}}
\begin{proof}(Of Lemma~\ref{lem:largenoi2})
Let $k_2 = \epsilon$; then $\epsilon \geq 0$. Plugging in $k_1^* = 2$, we can re-write~\eqref{eq:orderopt_t_2} as
\begin{equation}\label{eq:xq2}
Q(\epsilon) = 4 \Noi{1}^2 \beta_1^{\epsilon - 1} + \Noi{2}^2 (1 - \beta_1^{\epsilon - 1}) \left(-1 + \epsilon + \frac{1}{-1 + \epsilon} + 2 \right).
\end{equation}
Taking the derivative, we obtain the following:
\begin{align}\label{eq:xdervq2}
Q'(\epsilon) 
 = & - 4 \Noi{1}^2 \beta_1^{\epsilon - 1} \log(1/\beta_1) + \Noi{2}^2 (1 - \beta_1^{\epsilon - 1}) (1 - \frac{1}{(1 - \epsilon)^2}) - \frac{\Noi{2}^2 \epsilon^2}{1 - \epsilon} \beta_1^{\epsilon - 1} \log(1/\beta_1)  \nonumber \\
 = &  - \beta_1^{\epsilon - 1} \log(1/\beta_1) \left(4 \Noi{1}^2 + \frac{\Noi{2}^2 \epsilon^2}{1 - \epsilon}\right) + \Noi{2}^2 (\beta_1^{\epsilon - 1} - 1) \left( \frac{1}{(1 - \epsilon)^2} - 1 \right) \nonumber
\\
 = &  - \beta_1^{\epsilon - 1} \log(1/\beta_1) \left(4 \Noi{1}^2 + \frac{\Noi{2}^2 \epsilon^2}{1 - \epsilon}\right) + \Noi{2}^2 (\beta_1^{\epsilon - 1} - 1) \frac{\epsilon(2-\epsilon)}{(1 - \epsilon)^2}.
\end{align}
For $\epsilon = \frac{\Noi{1}^2}{\Noi{2}^2}\le 1$, using $1 - \beta_1^{1 - \epsilon} \leq (1 - \epsilon) \log(1/\beta_1)$ and $\beta_1^{\epsilon - 1} - 1 = (1 - \beta_1^{1 - \epsilon}) \beta^{\epsilon-1}$, this is at most:
	\[ 
	-\beta_1^{\epsilon - 1} \log(1/\beta_1) 
		\left( 4 \Noi{1}^2 + \frac{\Noi{2}^2 \epsilon^2}{1 - \epsilon} 
				- \frac{\Noi{2}^2 \epsilon (2 - \epsilon)}{ 1 - \epsilon } 
		\right) 
	= -2 \Noi{1}^2 \beta_1^{\epsilon - 1} \log(1/\beta_1).
	\]
Thus, at $l = \frac{\Noi{1}^2}{\Noi{2}^2}$, $Q'(l) < 0$.
\\
Moreover, for $\epsilon \in [0, \frac{1}{2}]$, $1 - \beta_1^{1 - \epsilon} \geq \beta_1 (1 - \epsilon) \log (1/\beta_1)$. Therefore, $Q'(\epsilon)$ is at least:
\begin{align*}
Q'(\epsilon) &\ge -\beta_1^{\epsilon - 1} \log(1/\beta_1) \left(4 \Noi{1}^2 + \frac{\Noi{2}^2 \epsilon^2}{1 - \epsilon}\right) + \Noi{2}^2 \beta_1^{\epsilon} \log (1 / \beta_1) \frac{\epsilon (2 - \epsilon)}{1 - \epsilon} \\
&\geq  \beta_1^{\epsilon - 1} \log(1/\beta_1) \left( \frac{\Noi{2}^2 \beta \epsilon (2 - \epsilon)}{1 - \epsilon} - 4 \Noi{1}^2 - \frac{\Noi{2}^2 \epsilon^2}{1 - \epsilon} \right).
\end{align*}
Let $u = \frac{8 \Noi{1}^2}{\beta_1 \Noi{2}^2}$; suppose that $\Noi{2}/\Noi{1}$ is large enough such that $u \leq \beta_1/4$. Then, $u (2 - u) \beta_1 - u^2 \geq \frac{15 u \beta_1}{16}$, and 
	\[ 
	\frac{\Noi{2}^2 (u (2 - u) \beta_1 - u^2)}{1 - u} 
	\geq \frac{15 \Noi{2}^2 u \beta_1}{16 (1 - \beta_1)}  
	\geq \frac{15 \Noi{1}^2}{2 (1 - \beta_1)} \geq 5 \Noi{1}^2. \]
Therefore, $Q'(u) > 0$, and thus $Q(\epsilon)$ is minimized at some $\epsilon \in [l, u]$. 
\\
For the second part of the lemma, the upper bound is obtained by plugging in $\epsilon = \frac{\Noi{1}}{\Noi{2}}$. For the lower bound, observe that for any $\epsilon \in [ l, u]$, $Q(\epsilon) \geq 4 \Noi{1}^2 \beta_1^{u - 1} \geq 4 \Noi{1}^2 \beta_1^{\Noi{2}^2/\beta \Noi{1}^2 - 1}$.  
\end{proof}

\bibliography{privacy}
\bibliographystyle{plainnat}

\end{document}